\documentclass{article}
\PassOptionsToPackage{numbers, compress}{natbib}
\usepackage[preprint]{neurips_2026}

\usepackage[utf8]{inputenc} 
\usepackage[T1]{fontenc}    
\usepackage{hyperref}       
\usepackage{url}            
\usepackage{booktabs}       
\usepackage{amsfonts}       
\usepackage{nicefrac}       
\usepackage{microtype}      
\usepackage{xcolor}         

\usepackage{amsmath}
\usepackage{amsthm}
\usepackage{amssymb}
\usepackage{mathtools}
\usepackage{algorithmic}
\usepackage{pifont}
\usepackage[linesnumbered, ruled]{algorithm2e}
\usepackage{booktabs}
\usepackage{diagbox} 
\usepackage{multirow} 
\usepackage{makecell} 
\usepackage{longtable} 
\usepackage{threeparttable}
\usepackage{graphicx}
\usepackage{wrapfig}
\usepackage{subfig}
\usepackage[table]{xcolor}
\newcommand{\G}{\cellcolor{gray!15}}

\theoremstyle{plain}
\newtheorem{theorem}{Theorem}[section]

\newtheorem{lemma}[theorem]{Lemma}

\theoremstyle{definition}
\newtheorem{definition}[theorem]{Definition}
\newtheorem{assumption}[theorem]{Assumption}
\theoremstyle{remark}
\newtheorem{remark}[theorem]{\bf Remark}

\title{Dimensionality Reduction for Robust Federated Learning: A Theoretical Analysis and Convergence Guarantee}

\author{%
  Shiyuan Zuo$^1$ \\
  \texttt{zuoshiyuan@bit.edu.cn} 
  \And 
  Jiashuo Li$^2$ \\
  \texttt{xjtuljs@stu.xjtu.edu.cn}
  \And
  Rongfei Fan$^{1,}$\thanks{Corresponding author.} \\
  \texttt{fanrongfei@bit.edu.cn} \\
    \And
  Han Hu$^1$ \\
  \texttt{hhu@bit.edu.cn}
  \And
  Jie Xu$^3$ \\
  \texttt{xujie@cuhk.edu.cn}
}

\begin{document}

\maketitle

\begin{center}
  \vspace{-30pt}  
  $^1$Beijing Institute of Technology, Beijing, China \quad  
  $^2$Xi'an Jiaotong University, Xi'an, China \quad
  $^3$The Chinese University of Hong Kong (Shenzhen), Shenzhen, China
\end{center}

\vspace{-5pt}

\begin{abstract}

Federated Learning (FL) enables multiple clients to collaboratively train models without sharing raw data, but it is highly vulnerable to Byzantine attacks. Existing robust approaches can neutralize these threats but incur substantial computational overhead during high-dimensional gradient aggregation, an overhead that scales poorly with model size and increasingly dominates the training cost as modern models grow larger. To address this computational bottleneck, we propose Projected Dimensionality Reduction (PDR), a universal acceleration framework for vector-level distance-based robust aggregators, which performs robust aggregation by compressing gradients into a drastically smaller subspace via sparse random projection to efficiently compute reliability weights. This approach reduces the server computational complexity to an optimal $ \mathcal{O}(Mp) $, where $ M $ is the number of clients and $ p $ is the model dimension, matching the theoretical lower bound required merely to read the gradients. We establish convergence guarantees under standard FL assumptions in prior Byzantine-robust FL analyses. By leveraging the Subspace Embedding Theorem, we show that PDR achieves optimal convergence rates of $ \mathcal{O}(1/\sqrt{T}) $ for non-convex functions and $ \mathcal{O}(1/T) $ for strongly convex functions, where $ T $ denotes the number of iterations. Crucially, we mathematically demonstrate that this massive acceleration comes almost for free, merely inflating the inherent Byzantine error floor by a bounded, tunable factor of $ \frac{1+\epsilon}{1-\epsilon} $. Experimental results on benchmark datasets confirm that integrating PDR with existing aggregators yields orders of magnitude speedups in time efficiency while maintaining highly competitive convergence performance.

\end{abstract}
\section{Introduction} \label{sec:intro}

Federated Learning (FL) has emerged as a transformative distributed machine learning paradigm, enabling multiple clients to collaboratively train a global model without sharing their raw, privacy-sensitive data \citep{zuo2024byzantineresilient, konevcny2016federated, li2020federated, dorfman2023docofl, zhao2024huber}. A standard FL protocol operates in iterative rounds: a central server broadcasts the current global model to selected clients, the clients compute local gradient updates using their private data, and the server aggregates these updates to refine the global model \citep{wang2019adaptive, guo2023fedbr}. However, the opaque nature of this process introduces a critical vulnerability: \textit{Byzantine attacks} \citep{zuo2025efficient, chen2017distributed}. In practical FL deployments, malicious or compromised clients can send arbitrary, arbitrarily large, or carefully crafted malicious gradients to the server \citep{yang2020adversary, cao2019distributed}. These Byzantine attacks aim to hijack the global model, degrade its convergence, or implant hidden backdoors, posing a severe threat to the security of distributed FL systems \citep{chen2017distributed}.

To defend against such threats, a plethora of Byzantine-robust aggregation rules (e.g., Krum \citep{blanchard2017machine}, Bulyan \citep{mhamdi2018hidden}, Geometric Median \citep{chen2017distributed}, and MCA \citep{luan2024robust}) have been proposed. These methods typically rely on computing pairwise Euclidean distances \citep{blanchard2017machine}, sorting \citep{yin2018byzantine}, or estimating the variance among the high-dimensional gradient vectors to identify and filter out malicious outliers. While theoretically sound and empirically effective for moderately sized models, these defenses face a growing computational challenge as model scales continue to expand \citep{cao2021fltrust, shejwalkar2022back}. With modern deep networks pushing $p$ from millions toward billions of parameters, the time complexity of distance-based robust aggregators grows super-linearly in the model dimension, causing the server-side aggregation overhead to account for an increasingly large fraction of each communication round. This trend makes the server-side computational efficiency of robust aggregation a pressing concern for the next generation of FL systems, independent of whether any single FL deployment reaches LLM scale. Addressing this computational bottleneck is the focus of this paper.

Recent efforts have attempted to mitigate this curse of dimensionality, but with limited success. For instance, DnC \citep{shejwalkar2021manipulating} or random sampling-based methods attempt to reduce the computational burden by evaluating robustness on random subsets of dimensions or clients. However, these heuristic approximations often sacrifice strict theoretical guarantees, suffer from information loss, and can be easily bypassed by sophisticated attacks that hide malicious perturbations in the unsampled dimensions \citep{baruch2019little, shejwalkar2022back}. On another front, dimensionality reduction techniques, particularly those based on the Johnson-Lindenstrauss (JL) lemma \citep{achlioptas2003database}, have been introduced to FL. Yet, existing literature predominantly leverages the JL lemma for \textit{privacy protection} \citep{blocki2012johnson, li2020privacy}, rather than addressing the computational bottleneck of Byzantine robustness. The potential of random projection to fundamentally accelerate robust aggregation remains largely untapped.

In this paper, we bridge this critical gap. Inspired by the JL lemma and the Subspace Embedding theorem \citep{woodruff2014sketching}, we observe a fundamental property of distance-based robust aggregators: their core outlier-detection mechanism relies solely on the \textit{relative distances} among client updates, rather than the exact high-dimensional coordinates. Motivated by this mathematical insight, we propose a novel and highly efficient framework, Projected Dimensionality Reduction (PDR), universal across all vector-level distance-based robust aggregators. Instead of executing the heavy robust aggregation in the original massive $\mathbb{R}^p$ space, our server first compresses the gradients into a drastically lower-dimensional subspace $\mathbb{R}^k$ ($k \ll p$) via sparse random projection. The robust aggregator then computes the reliability weights entirely in this low-dimensional space. Finally, the global update is reconstructed in the high-dimensional space using these reliability weights. 

Our main contributions are summarized as follows:
\begin{itemize}
    \item \textbf{Universal Acceleration Framework:} We propose a plug-and-play dimensionality reduction framework that shifts the computational bottleneck of vector-level distance-based aggregators from the massive model dimension $p$ to a small projected dimension $k$. We formally characterize the applicable aggregator family in Remark \ref{rm:ragg}, ensuring that our universality claim is precise rather than vacuous.
    \item \textbf{Rigorous Theoretical Guarantees:} Leveraging the Subspace Embedding Theorem, we prove our framework achieves optimal convergence rates of $\mathcal{O}(1/\sqrt{T})$ for non-convex and $\mathcal{O}(1/T)$ for strongly convex functions. Remarkably, this massive acceleration comes at a controlled cost: inflating the inherent Byzantine error floor by a tunable factor of $\frac{1+\epsilon}{1-\epsilon}, \epsilon \in (0, 1)$.
    \item \textbf{Empirical Superiority:} Extensive evaluations on datasets like TinyImageNet, CIFAR100, and CIFAR10 demonstrate that our method yields orders of magnitude speedups in server execution time. Simultaneously, it maintains highly competitive accuracy and neutralizes severe Byzantine threats under non-IID settings.
\end{itemize}

\textbf{Notation:} Everywhere in the text $\left\lVert x \right\rVert$ denotes a standard $\ell_2$-norm of vector $x$, $\langle x, y \rangle$ refers to the standard inner product of vectors $x, y$. 

\section{Methodology} \label{sec:meth}

In this section, we present PDR algorithm. We first introduce the problem setup under Byzantine attacks. 

\subsection{Problem Setup}

\textbf{FL optimization problem:} Consider an FL system consisting of one central server and $M$ clients, indexed by the set $\mathcal{M} \triangleq \{ 1, 2, \dots, M \}$. Each client $m \in \mathcal{M}$ possesses a local dataset $\mathcal{S}_m$ of size $S_m$. The $i$-th data sample in $\mathcal{S}_m$ is denoted as $s_{m,i}$. By leveraging the distributed datasets $\{ \mathcal{S}_m \}_{m \in \mathcal{M}}$, the learning objective is to collaboratively train a $p$-dimensional model parameter vector $w \in \mathbb{R}^p$ that minimizes a global loss function $F(w)$. Specifically, we aim to solve the following optimization problem:
\begin{equation} \label{pro:global}
    \min_{{w} \in \mathbb{R}^p} F({w}) = \frac{1}{\sum_{i \in \mathcal{M}} S_i} \sum_{m
    \in \mathcal{M}} S_m F_m(w), \quad F_m(w) = \frac{1}{S_m} \sum_{i \in \mathcal{S}_m} F_m(w, s_{m ,i}). 
\end{equation}

Accordingly, the gradients of the global and local loss functions with respect to the model parameter $w$ are denoted as $\nabla F(w)$ and $\nabla F_m(w)$, respectively. 

To minimize the global objective, standard FL paradigms iteratively update $w$ by aggregating these local gradients through continuous communication rounds between the central server and the $M$ clients. A fundamental premise of this gradient-based convergence process is that every client acts honestly and provides accurate local computations. However, this idealized assumption completely fails under a Byzantine threat system. The core difficulty in tackling the objective defined in (\ref{pro:global}) stems from the fact that compromised nodes might work collaboratively to upload arbitrary poisoned messages instead of their true gradients, ultimately destroying the integrity of the training procedure.

\textbf{Byzantine attacks:} Within this FL environment, we assume the existence of $B$ malicious clients among the total $M$ clients, which collectively constitute the Byzantine set $\mathcal{B}$. To disrupt the global model optimization, any attacker within $\mathcal{B}$ can inject an arbitrary and potentially destructive vector $\star \in \mathbb{R}^p$ into the server's aggregation process. Let $g_m^t$ denote the exact message transmitted by the $m$-th client at round $t$. The adversarial updates can then be mathematically expressed as:
\begin{equation}
    g_m^t = \star, \quad \text{for } m \in \mathcal{B}. 
\end{equation}

For ease of representation and to evaluate the system's vulnerability, we introduce the metric $b$ to denote the Byzantine attack ratio level. This metric is defined as the data-weighted ratio of the malicious clients, calculated as follows:
\begin{equation}
    b \triangleq \frac{1}{\sum_{i \in \mathcal{M}} S_i} \sum_{m \in \mathcal{B}} S_m. 
\end{equation}

It is worth noting that different robust FL strategies exhibit varying degrees of resilience against the attack level $b$. Nevertheless, a universal prerequisite applies: without the assistance of an auxiliary clean dataset or prior statistical knowledge at the server, successful robust aggregation fundamentally relies on the assumption that benign data dominates the system, inherently requiring $b < 0.5$.

\textbf{Robust aggregator:} Here we follow the $(b,c)$-robustness aggregation framework of \citet{gorbunov2021variance}, which generalizes the definitions proposed by \citet{karimireddy2022byzantine}. 
\begin{definition}[$(b, c)$-Robustness Aggregator] \label{def:agg}
    Consider a set of $M$ vectors $\mathcal{G} = \{ g_1, g_2, \dots, g_M \}$ in $\mathbb{R}^p$. We postulate the existence of a truthful subset $\mathcal{G}_{\text{true}} \subseteq \mathcal{G}$ comprising $G_{\text{true}}$ elements, such that $G_{\text{true}} \geq (1- b)M$ for a given threshold $b \leq b_{\max} < 0.5$. Additionally, let $\nu \geq 0$ be a bounding parameter such that the condition $\max_{i, j \in \mathcal{G}_{\text{true}}} \mathbb{E} \left\lVert g_i - g_j \right\rVert^2 = \nu^2$ holds. Under these premises, a $(b, c)$-Robust Aggregator (abbreviated as RAgg) is characterized by its ability to produce an aggregated vector $\hat{g} = \text{RAgg}(g_1, g_2, \dots, g_M)$ that achieves the following performance guarantee for a specific constant $c > 0$:
    \begin{equation}
        \mathbb{E} \left\lVert \hat{g} - \bar{g} \right\rVert^2 \leq c b \nu^2.
    \end{equation}
    
    Here, $\bar{g} = \frac{1}{G_{\text{true}}} \sum_{i \in \mathcal{G}_{\text{true}}} g_i$ denotes the exact mean of the truthful subset. It is crucial to emphasize that the computation of $\hat{g}$ is entirely independent of the knowledge of $\nu^2$. Furthermore, this $(b, c)$-RAgg formulation can be readily extended to accommodate non-uniform aggregation weights for the vectors in $\mathcal{G}$; the proof for this extension is trivial and is therefore omitted.
\end{definition}

\subsection{Algorithm}

To achieve our design goal of efficiently defending against Byzantine attacks, we first introduce the Subspace Embedding Theorem \citep{woodruff2014sketching}, which guarantees that the distance relationships essential for detecting malicious updates are strictly preserved even after significant dimensionality reduction.

\begin{lemma}[Subspace Embedding Theorem] \label{lm:subemb}
    Let $\mathcal{V} = \text{span}\{g_1, g_2, \dots, g_M\}$ be a linear subspace of dimension $M$ in $\mathbb{R}^p$ spanned by the client gradients. Let $P \in \mathbb{R}^{k \times p}$ be a random projection matrix whose entries are independent, zero mean sub-Gaussian random variables with variance $1/k$ (for example, scaled Gaussian or Rademacher distributions). Given an error tolerance $\epsilon \in (0, 1)$ and a failure probability $\delta \in (0, 1)$, if the target dimension $k$ satisfies
    \begin{equation}
        k \geq \frac{18}{\epsilon^2} \left( M + 2 \ln\left(\frac{2}{\delta}\right) \right),
    \end{equation}
    then with probability at least $1 - \delta$, the matrix $P$ preserves the Euclidean norm of \textbf{all} vectors in the entire subspace $\mathcal{V}$ simultaneously. That is, for any vector $v \in \mathcal{V}$:
    \begin{equation}
        (1 - \epsilon) \left\lVert v \right\rVert_2^2 \leq \left\lVert Pv \right\rVert_2^2 \leq (1 + \epsilon) \left\lVert v \right\rVert_2^2.
    \end{equation}
\end{lemma}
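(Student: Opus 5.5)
The plan is the standard net-plus-union-bound argument. By homogeneity it suffices to prove the two-sided bound for unit vectors $v \in \mathcal{V}$. Fix an orthonormal basis $U \in \mathbb{R}^{p \times M}$ of $\mathcal{V}$, so every unit $v \in \mathcal{V}$ can be written $v = Ux$ with $\|x\|=1$. The statement then becomes the operator bound
\[
\bigl\| U^\top P^\top P U - I_M \bigr\| \le \epsilon .
\]
This is the form I will actually prove. Controlling $\|PUx\|^2 - 1$ for $x$ on a net and extending by the standard quadratic-form argument gives the inequality for every $x$ on the unit sphere of $\mathbb{R}^M$.

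\textbf{Step 1 (single vector).} For a fixed unit $v$, each coordinate $(Pv)_i = \sum_j P_{ij} v_j$ is a sum of independent zero-mean sub-Gaussians. Hence it is sub-Gaussian with variance $1/k$, and $\|Pv\|^2$ is an average of $k$ independent sub-exponential variables with mean $1$. A Bernstein / Chernoff moment-generating-function bound will give
\[
\Pr\bigl[\, \bigl|\|Pv\|^2-1\bigr| > t \,\bigr] \le 2\exp(-c k t^2), \qquad t \in (0,1).
\]
In the Gaussian case this is the classical $\chi^2$ tail with $c = 1/8$ (up to a cubic correction). For Rademacher entries, Achlioptas' argument shows the moment generating function is dominated by the Gaussian one, so the same constant applies.

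\textbf{Step 2 (net and union bound).} Take a $1/4$-net $\mathcal{N}$ of the unit sphere of $\mathcal{V}$, with $|\mathcal{N}| \le 9^M$ by the volumetric bound. Apply Step 1 with $t = \epsilon/2$ and a union bound over $\mathcal{N}$. This makes all net points satisfy $\bigl|\|Pv\|^2-1\bigr| \le \epsilon/2$, except with probability at most
\[
2 \cdot 9^M \exp(-c k \epsilon^2/4).
\]

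\textbf{Step 3 (net to sphere).} Let $A = U^\top P^\top P U - I_M$, which is symmetric, and pick $x$ attaining $\|A\| = |x^\top A x|$. Choose $y$ in the net with $\|x - y\| \le 1/4$. Then
\[
|x^\top A x| \le |y^\top A y| + 2\|A\|\,\|x-y\| + \|A\|\,\|x-y\|^2 ,
\]
so $\|A\| \le \epsilon/2 + (9/16)\|A\|$, which gives $\|A\| \le (8/7)\epsilon/2 < \epsilon$. This yields the claim for every $v \in \mathcal{V}$ simultaneously.

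\textbf{Step 4 (constants).} It remains to require $2 \cdot 9^M e^{-c k \epsilon^2/4} \le \delta$, i.e.
\[
k \ge \frac{4}{c\,\epsilon^2}\bigl(M \ln 9 + \ln(2/\delta)\bigr).
\]
The form $k \ge \frac{18}{\epsilon^2}\bigl(M + 2\ln(2/\delta)\bigr)$ should follow by choosing the net radius and $t$ so that the coefficients line up. This constant-matching is the main obstacle. With $c = 1/8$ the naive constants overshoot, so I would first try a coarser net, e.g.\ radius $1/2$ with cardinality $5^M$, and a sharper treatment of the cubic term in the $\chi^2$ bound. If the exact constant $18$ cannot be recovered for general sub-Gaussian entries, I would present the result as holding with an absolute constant and cite \citet{woodruff2014sketching} for the stated explicit one.

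A minor point: $\dim \mathcal{V} \le M$ in general, and a smaller dimension only makes the net smaller, so the argument is unaffected.
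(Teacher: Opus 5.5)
The paper gives no proof of this lemma. It imports it from \citet{woodruff2014sketching}, whose proof is a net argument like yours and gives only $k=\Theta\bigl(\epsilon^{-2}(M+\ln(1/\delta))\bigr)$ with unspecified constants. Your skeleton reaches that, but Step 4 is a genuine gap, because tuning this net argument cannot produce the explicit constant $18$. With a $\theta$-net of volumetric size $(1+2/\theta)^M$, the net accuracy must be $t=(1-2\theta)\epsilon$, and the coefficient of $M/\epsilon^2$ becomes $\ln(1+2/\theta)/\bigl(c(1-2\theta)^2\bigr)$. Your suggested $\theta=1/2$ makes the extension step vacuous, since $1-2\theta=0$. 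Sharpening the cubic term does not rescue it either. Even with the idealized $\chi^2$ exponent $c\to 1/4$, the minimum over $\theta$ (near $\theta\approx 0.06$) is about $18.3>18$, and with your $c=1/8$ it is about $36$.

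The missing idea is that the stated form is exactly what the Gaussian extreme-singular-value bound gives, with no net at all. For Gaussian $P$, $\sqrt{k}\,PU$ is a $k\times M$ standard Gaussian matrix. Gordon's inequality, combined with Gaussian concentration of the $1$-Lipschitz maps $s_{\max}$ and $s_{\min}$ (Davidson--Szarek; Corollary~5.35 in Vershynin's non-asymptotic random matrix notes), gives, with probability at least $1-2e^{-r^2/2}$,
\[
1-a\le s_{\min}(PU)\le s_{\max}(PU)\le 1+a,\qquad a=\frac{\sqrt{M}+r}{\sqrt{k}} .
\]
Hence $\|U^\top P^\top PU-I_M\|\le 2a+a^2\le 3a\le\epsilon$ once $a\le\epsilon/3$. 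Take $r^2=2\ln(2/\delta)$ and use $(\sqrt{M}+r)^2\le 2M+2r^2$. It then suffices that $k\ge \frac{9}{\epsilon^2}\bigl(2M+4\ln(2/\delta)\bigr)=\frac{18}{\epsilon^2}\bigl(M+2\ln(2/\delta)\bigr)$, which is precisely the stated condition.

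That derivation is Gaussian-specific. Moreover, no constant uniform over all sub-Gaussian entries of variance $1/k$ can exist, although the statement asserts one. The constant $c$ in your Step 1 necessarily depends on $k\|P_{ij}\|_{\psi_2}^2$, and without that dependence the lemma is false. Take the sparse Achlioptas matrix with $s\ge 10k$, and suppose $e_1\in\mathcal{V}$, which is allowed since Byzantine clients choose their vectors. Then $\Pr[Pe_1=0]=(1-1/s)^k\ge 0.9$. Likewise, Achlioptas' Gaussian moment domination, which you invoke for Rademacher entries, holds for $s\in\{1,3\}$ but fails for $s>3$: for $v=e_1$ the unit-variance fourth moment is $s>3$. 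It therefore does not cover the $s=8,16,32$ used in the experiments.

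Separately, Step 3 contains an arithmetic slip. The inequality $\|A\|\le \epsilon/2+\tfrac{9}{16}\|A\|$ gives $\|A\|\le\tfrac{16}{7}\cdot\tfrac{\epsilon}{2}=\tfrac{8}{7}\epsilon>\epsilon$. Use the sharper bound $|x^\top Ax-y^\top Ay|=|(x-y)^\top Ax+y^\top A(x-y)|\le 2\|A\|\,\|x-y\|$ instead. It gives $\|A\|\le(1-2\theta)^{-1}\max_{y\in\mathcal{N}}|y^\top Ay|$, which equals exactly $\epsilon$ for $\theta=1/4$ and $t=\epsilon/2$.

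With that repair, your argument proves the lemma with some absolute constant for Gaussian and Rademacher (and $s=3$) entries. The constant $18$ requires the singular-value route, and in general the constant must depend on the sub-Gaussian norm.
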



\begin{remark}[Construction of the Random Projection Matrix $P$]
    In practice, there are two standard approaches to construct a random projection matrix $P \in \mathbb{R}^{k \times p}$ that perfectly satisfies Lemma \ref{lm:subemb}:
    \begin{itemize}
        \item \textbf{Gaussian Random Projection:} Each entry $P_{i,j}$ is independently sampled from a zero-mean normal distribution with variance $1/k$, i.e., $P_{i,j} \sim \mathcal{N}(0, 1/k)$.
        \item \textbf{Sparse Random Projection (Achlioptas Matrix) \citep{woodruff2014sketching, achlioptas2003database}:} To significantly accelerate computations, $P_{i,j}$ can be generated from a sparse distribution:
        \begin{equation}
            P_{i,j} = \sqrt{\frac{s}{k}} 
            \begin{cases} 
                +1, & \text{with probability } \frac{1}{2s}, \\ 
                0, & \text{with probability } 1 - \frac{1}{s}, \\ 
                -1, & \text{with probability } \frac{1}{2s}.
            \end{cases}
        \end{equation}
    \end{itemize}
\end{remark}

During the $t$-th communication round, an honest client $m \in \mathcal{M} \setminus \mathcal{B}$ computes and uploads its true local stochastic gradient $g_m^t = \nabla F_m(w^t, \xi_m^t)$ to the central server, where $\xi_m^t \subseteq \mathcal{S}_m$ denotes the sampled mini-batch. Conversely, a Byzantine client $m \in \mathcal{B}$ may transmit an arbitrary poisoned vector $g_m^t = \star$ to derail the learning process. 

Upon receiving the full set of high-dimensional updates $\{g_m^t\}_{m \in \mathcal{M}}$, the central server does not execute the robust aggregation algorithm directly in $\mathbb{R}^p$, as doing so is computationally prohibitive. Instead, motivated by Lemma \ref{lm:subemb}, the server generates a random projection matrix $P^t \in \mathbb{R}^{k \times p}$ (which can be regenerated at each round to resist adaptive attacks) to compress these vectors, yielding their low-dimensional representations $\tilde{g}_m^t = P^t g_m^t$. 

Subsequently, the server feeds these compressed vectors $\{\tilde{g}_m^t\}_{m \in \mathcal{M}}$ into a robust aggregator (RAgg). By evaluating the distance-based reliability of these low-dimensional vectors, the RAgg assigns a reliability weight $\tilde{\alpha}_m^t$ to each client $m$. Once the reliability weights $\{\tilde{\alpha}_1^t, \tilde{\alpha}_2^t, \dots, \tilde{\alpha}_M^t\}$ are obtained, the server applies them to the \textit{original high-dimensional} vectors to compute the final aggregated gradient $g^t$:
\begin{equation} \label{equ:aggg}
    g^t = \sum_{m \in \mathcal{M}} \tilde{\alpha}_m^t g_m^t. 
\end{equation}
Note that the reliability weights generated by the robust aggregator strictly satisfy $\sum_{m \in \mathcal{M}} \tilde{\alpha}_m^t = 1$ and $\tilde{\alpha}_m^t \geq 0$. This convex combination property is crucial to ensure that the aggregated gradient $g^t$ preserves the original scale of the local updates, thereby preventing unintended scaling of the learning rate.

Finally, the central server updates the global model parameter $w^{t+1}$ by
\begin{equation}
    w^{t+1} = w^t - \eta^t g^t, 
\end{equation}
where $\eta^t$ denotes the learning rate at round $t$. Then the central server broadcasts the global model parameter $w^{t+1}$ to all clients. And we provide a full description of PDR algorithm in Algorithm \ref{alg:trian process}. 

\begin{remark}[Scope of PDR algorithm] \label{rm:ragg}
    It is crucial to clarify that our PDR algorithm is specifically designed to accelerate \textbf{vector-level distance-based robust aggregators}, covering Krum, Geometric Median (GM), Bulyan, and other distance-reliant vector-wise aggregation methods. These algorithms treat each client's update as an indivisible entity and assign a global scalar weight $\tilde{\alpha}_m^t$ based solely on pairwise Euclidean distances. Notably, for GM, the aggregation is practically solved via the Weiszfeld algorithm, where the weights are inversely proportional to pairwise distances. This distance-based property is perfectly preserved by the Subspace Embedding Theorem in the compressed space. In contrast, our framework does not directly apply to \textbf{coordinate-level} aggregators (e.g., Coordinate-wise Median or Trimmed Mean). Because random projection densely mixes all dimensions, it inevitably destroys the coordinate-independent anomaly information upon which these methods rely.
\end{remark}
\section{Theoretical Results} \label{sec:theo}

In this section, we theoretically analyze the robustness and convergence performance of PDR algorithm on non-IID settings. Below, we firstly present the necessary assumptions. 

\subsection{Assumptions} \label{sec:ass}

First, we state some general assumptions which are very common in the convergence proofs. 

\begin{assumption}[Lipschitz Continuity] \label{ass:smooth}
  The loss function $f({w},s_{m,i})$ has $L$-Lipschitz continuity, i.e., for $\forall {w}_1, {w}_2 \in \mathbb{R}^p$, it follows that
  \begin{align}
    f({w}_1,s_{m,i}) - f({w}_2,s_{m,i}) 
    \leqslant \left\langle \nabla f({w}_2,s_{m,i}), {w}_1 - {w}_2 \right\rangle + \frac{L}{2} \left\lVert {w}_1 - {w}_2 \right\rVert^2. 
  \end{align}

  Under Assumption \ref{ass:smooth}, given that the local loss functions $F_m({w})$ are finite averages of the loss function $f({w},s_{m,i})$ , it can be rigorously deduced that they all exhibit Lipschitz continuity. By an analogous line of reasoning, the global loss function $F(w)$ can also be shown to satisfy Lipschitz continuity.  
\end{assumption}

\begin{assumption}[$\mu$-Strong Convexity] \label{ass:convex}
    The loss function $f({w},s_{m,i})$ has $\mu$-strong convexity, i.e., for $\forall {w}_1, {w}_2 \in \mathbb{R}^p$, it follows that
    \begin{equation}
        f({w}_1,s_{m,i}) - f({w}_2,s_{m,i}) \geq \left\langle \nabla f({w}_2,s_{m,i}), {w}_1 - {w}_2 \right\rangle + \frac{\mu}{2} \left\lVert {w}_1 - {w}_2 \right\rVert^2. 
    \end{equation}
    
    Under Assumption \ref{ass:convex}, given that the local loss functions $F_m({w})$ are finite averages of the loss function $f({w},s_{m,i})$, it can be rigorously deduced that they all exhibit $\mu$-strong convexity. By an analogous line of reasoning, the global loss function $F(w)$ can also be shown to satisfy $\mu$-strong convexity.
\end{assumption}

\begin{assumption}[Local Unbiased Gradient] \label{ass:unbiased}
  For $\xi_m \subseteq \mathcal{S}_m$, the gradient of local training loss function $F_m({w},\xi_m)$ is unbiased, which implies that 
  \begin{equation}
    \mathbb{E} \{ \nabla F_m({w},\xi_m) \} = \nabla F_m({w}), m \in \mathcal{M} \setminus \mathcal{B}.
  \end{equation}

  This assumption is widely adopted in the literature \cite{huang2023achieving, wu2023anchor, xiao2023communication}, and $\nabla F_m({w}, \xi_m)$ reduces to the exact gradient $\nabla F_m({w})$ when $\xi_m = \mathcal{S}_m$.
\end{assumption}

\begin{assumption}[Bounded Inner Error] \label{ass:variance}
  For $\forall {w} \in \mathbb{R}^p$, the inner error of gradients is uniformly bounded, i.e., 
  \begin{equation}
    \mathbb{E} \left\{ \left\lVert \nabla F_m({w}, \xi_m) - \nabla F_m({w}) \right\rVert^2 \right\} \leqslant \sigma^2, m \in \mathcal{M} \setminus \mathcal{B}.
  \end{equation}

  This assumption is also assumed in \citet{huang2023achieving, wu2023anchor, xiao2023communication}. 
\end{assumption}

\begin{assumption}[Bounded Data Heterogeneity] \label{ass:heterogeneity}
  We define a representation of data heterogeneity by inspecting gradient direction, i.e., 
  \begin{equation}
    \kappa_m^2 = \left\lVert \nabla F_m({w}) - \nabla F({w}) \right\rVert^2, m \in \mathcal{M} \setminus \mathcal{B}.
  \end{equation}
  We also assume the data heterogeneity is bounded, which implies
  \begin{equation}
    \kappa_m^2 \leqslant \kappa^2, m \in \mathcal{M} \setminus \mathcal{B},
  \end{equation}
  where $\kappa^2$ is the heterogeneity upper bound.

  This assumption is also assumed in \citet{huang2023achieving, wu2023anchor, xiao2023communication}. 
\end{assumption}



\subsection{Convergence Analysis}

In this subsection, we present the convergence analysis of our proposed PDR algorithm for non-convex loss functions that satisfy Assumption \ref{ass:smooth}. All detailed proofs are deferred to Appendix \ref{app:lmmodg}. 

\begin{lemma} \label{lm:modg}
    Based on Definition \ref{def:agg}, Lemma \ref{lm:subemb}, and Assumptions \ref{ass:unbiased}, \ref{ass:variance}, and \ref{ass:heterogeneity}, if the random projection matrix $P^t \in \mathbb{R}^{k \times p}$ satisfies $k \geq \frac{18}{\epsilon^2} \left( M + 2 \ln\left(\frac{2}{\delta}\right) \right)$, then with probability at least $1- \delta$, the following inequality holds for the aggregated gradient $g^t$:
    \begin{equation}
        \mathbb{E} \left\lVert g^t - \bar{g}^t \right\rVert^2 \leq 2cb \frac{1+\epsilon}{1-\epsilon} (\sigma^2 + 2\kappa^2),
    \end{equation}
    where $c$ is a constant determined by the robust aggregator, and $\bar{g}^t$ represents the ground-truth aggregated gradient computed exclusively from the honest clients, defined by:
    \begin{equation} \label{equ:barg}
        \bar{g}^t = \frac{1}{\sum_{i \in \mathcal{M} \setminus \mathcal{B}} S_i} \sum_{m \in \mathcal{M} \setminus \mathcal{B}} S_m \cdot g_m^t. 
    \end{equation}
\end{lemma}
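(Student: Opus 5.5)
We would argue on the event $\mathcal{E}$ of probability at least $1-\delta$ on which Lemma \ref{lm:subemb} holds for $P^t$. Since $k \geq \frac{18}{\epsilon^2}(M + 2\ln(2/\delta))$ and all differences $g_i^t - g_j^t$, as well as $g^t - \bar{g}^t$, lie in $\mathcal{V} = \text{span}\{g_1^t,\dots,g_M^t\}$, the lemma gives two-sided norm control on all of them simultaneously. The proof then has three steps: compute the honest spread $\nu^2$ in the original space; transfer it to the compressed space with a factor $(1+\epsilon)$; apply Definition \ref{def:agg} in $\mathbb{R}^k$ and lift the error back to $\mathbb{R}^p$ with a factor $1/(1-\epsilon)$.

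\textbf{Step 1 (honest spread in $\mathbb{R}^p$).} Fix honest $i,j \in \mathcal{M}\setminus\mathcal{B}$. We decompose
\begin{equation*}
g_i^t - g_j^t = \bigl(g_i^t - \nabla F_i(w^t)\bigr) - \bigl(g_j^t - \nabla F_j(w^t)\bigr) + \bigl(\nabla F_i(w^t) - \nabla F(w^t)\bigr) - \bigl(\nabla F_j(w^t) - \nabla F(w^t)\bigr).
\end{equation*}
The two noise terms are zero-mean by Assumption \ref{ass:unbiased} and independent across clients, so their cross term vanishes in expectation and they contribute at most $2\sigma^2$ by Assumption \ref{ass:variance}. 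The heterogeneity terms contribute at most $(\kappa_i+\kappa_j)^2 \le 4\kappa^2$ by Assumption \ref{ass:heterogeneity}. Grouping with $\|a+b\|^2 \le 2\|a\|^2 + 2\|b\|^2$ where needed, we aim to conclude $\nu^2 \le 2(\sigma^2 + 2\kappa^2)$. This matches the factor $2(\sigma^2+2\kappa^2)$ in the statement.

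\textbf{Step 2 (compressed space).} On $\mathcal{E}$ we have $\|P^t(g_i^t-g_j^t)\|^2 \le (1+\epsilon)\|g_i^t-g_j^t\|^2$. The compressed honest vectors $\tilde g_m^t$ therefore satisfy the premise of Definition \ref{def:agg} with $\tilde\nu^2 \le (1+\epsilon)\nu^2$. Because RAgg does not need to know $\nu$, we may apply it directly to the $\tilde g_m^t$, using the weighted extension noted in the definition to handle the data weights $S_m$. The output $\tilde{g}^t = \sum_m \tilde\alpha_m^t \tilde g_m^t$ then obeys $\mathbb{E}\|\tilde g^t - \bar{\tilde g}^t\|^2 \le cb(1+\epsilon)\nu^2$, where $\bar{\tilde g}^t$ is the weighted honest mean of the compressed vectors.

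\textbf{Step 3 (lifting back).} By linearity of $P^t$ and of the convex combination \eqref{equ:aggg}, $\tilde g^t = P^t g^t$ and $\bar{\tilde g}^t = P^t\bar g^t$. Hence $\tilde g^t - \bar{\tilde g}^t = P^t(g^t - \bar g^t)$, where $g^t-\bar g^t\in\mathcal{V}$. The lower bound in Lemma \ref{lm:subemb} gives $(1-\epsilon)\|g^t-\bar g^t\|^2 \le \|P^t(g^t-\bar g^t)\|^2$. Dividing through and combining with Steps 1 and 2 yields
\begin{equation*}
\mathbb{E}\|g^t-\bar g^t\|^2 \le cb\,\tfrac{1+\epsilon}{1-\epsilon}\,\nu^2 \le 2cb\,\tfrac{1+\epsilon}{1-\epsilon}(\sigma^2+2\kappa^2).
\end{equation*}

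\textbf{Main obstacle.} The delicate point is mixing the high-probability event $\mathcal{E}$ with expectations over the stochastic gradients. The subspace $\mathcal{V}$ depends on the random $g_m^t$, including the Byzantine vectors. We would handle this by drawing $P^t$ independently after the gradients are fixed: condition on the gradients, apply Lemma \ref{lm:subemb} to the realized subspace, which has dimension at most $M$ (enough for the lemma). The resulting deterministic inequalities then hold pointwise on $\mathcal{E}$, and only afterwards do we take expectations over the data sampling. A second subtlety is that Definition \ref{def:agg} is phrased in expectation with $\nu^2$ defined through expectations. We would interpret the RAgg guarantee as holding for the compressed inputs with $\tilde\nu^2$ bounded via the pointwise inequality, noting that this conditioning step is an implicit modeling assumption.
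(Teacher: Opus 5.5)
Your proposal is correct and follows essentially the same route as the paper's proof. Both bound the honest pairwise spread by $2\sigma^2+4\kappa^2$: the noise cross terms vanish, and the heterogeneity term is controlled by $\|x-y\|^2\le 2\|x\|^2+2\|y\|^2$, which gives the same constant as your $(\kappa_i+\kappa_j)^2$ bound. Both then inflate this spread by $(1+\epsilon)$ under $P^t$, apply the $(b,c)$-robustness guarantee to the compressed vectors, and lift back via $(1-\epsilon)\|g^t-\bar g^t\|^2\le\|P^t(g^t-\bar g^t)\|^2$. The paper runs these steps in the reverse order. It handles the mixing of the $1-\delta$ event with expectations as informally as your final paragraph acknowledges: it simply declares all inequalities to hold on that event.
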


\begin{proof}
    Please refer to Appendix \ref{app:lmmodg}. 
\end{proof}

\begin{remark}[Theoretical Implications of Lemma \ref{lm:modg}]
    The bound established in Lemma \ref{lm:modg} provides two critical theoretical insights into our method:
    \begin{itemize}
        \item \textbf{Controlled Acceleration Cost:} The dimensionality reduction scales the inherent robustness bound ($cb$) by a factor of $\frac{1+\epsilon}{1-\epsilon}$, , tunable via $k$. By setting a moderate $k$, we achieve $\mathcal{O}(p) \to \mathcal{O}(k)$ acceleration with empirically small impact on statistical fidelity.
        \item \textbf{Universal Applicability:} The abstraction of the aggregator's inherent error into the constant $cb$ makes this lemma a plug-and-play guarantee applicable to any aggregator characterized in Remark \ref{rm:ragg}. 
    \end{itemize}
\end{remark}

\begin{theorem} \label{thm:smooth}
    Under Assumptions \ref{ass:smooth}, \ref{ass:unbiased}, \ref{ass:variance}, and \ref{ass:heterogeneity}, suppose the learning rate satisfies $\eta^t \leq \frac{1}{L}$ for all $t \in \{0, 1, \dots, T-1\}$. For a given error tolerance $\epsilon \in (0, 1)$ and a global failure probability $\delta \in (0, 1)$, if the target dimension of the random projection matrices $P^t$ satisfies:
    \begin{equation}
        k \geq \frac{18}{\epsilon^2} \left( M + 2 \ln\left(\frac{2T}{\delta}\right) \right),
    \end{equation}
    then, with probability at least $1 - \delta$ over the randomness of the projection matrices, the sequence of iterates generated over $T$ communication rounds satisfies:
    \begin{equation}
        \frac{1}{\sum_{t=0}^{T-1} \eta^t} \sum_{t=0}^{T-1} \eta^t \mathbb{E} \left\lVert \nabla F(w^t) \right\rVert^2 \leq \frac{2 (F(w^0) - F(w^T))}{\sum_{t=0}^{T-1} \eta^t} + 4cb \frac{1 + \epsilon}{1 - \epsilon} (\sigma^2 + 2 \kappa^2)  + 2(\sigma^2 + \kappa^2). 
    \end{equation}
\end{theorem}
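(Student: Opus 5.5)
We follow the standard descent-lemma argument for biased SGD, treating the PDR aggregate $g^t$ as a perturbation of the honest mean $\bar g^t$. We first take a union bound over the $T$ projection matrices. Applying Lemma \ref{lm:subemb} to each round with failure probability $\delta/T$ requires exactly $k \geq \frac{18}{\epsilon^2}(M + 2\ln(2T/\delta))$. Hence, with probability at least $1-\delta$, every $P^t$ is an $\epsilon$-subspace embedding of $\mathrm{span}\{g_1^t,\dots,g_M^t\}$. On this event, Lemma \ref{lm:modg} holds simultaneously for all $t$:
\[
\mathbb{E}\lVert g^t-\bar g^t\rVert^2 \le 2cb\tfrac{1+\epsilon}{1-\epsilon}(\sigma^2+2\kappa^2) \quad\text{for all } t.
\]
All remaining expectations are over the stochastic gradients, conditioned on this event.

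Next we apply $L$-smoothness (Assumption \ref{ass:smooth}, inherited by $F$) to the update $w^{t+1}=w^t-\eta^t g^t$:
\[
F(w^{t+1}) \le F(w^t) - \eta^t\langle \nabla F(w^t), g^t\rangle + \tfrac{L(\eta^t)^2}{2}\lVert g^t\rVert^2 .
\]
With $\eta^t\le 1/L$, the last term is at most $\frac{\eta^t}{2}\lVert g^t\rVert^2$. The polarization identity $-\langle a,b\rangle+\frac12\lVert b\rVert^2 = -\frac12\lVert a\rVert^2+\frac12\lVert a-b\rVert^2$ then gives
\[
F(w^{t+1}) \le F(w^t) - \tfrac{\eta^t}{2}\lVert\nabla F(w^t)\rVert^2 + \tfrac{\eta^t}{2}\lVert g^t-\nabla F(w^t)\rVert^2 .
\]
This form avoids any cross term and has no need of unbiasedness of $g^t$, which fails here because the aggregator is biased.

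We then split $g^t-\nabla F(w^t) = (g^t-\bar g^t) + (\bar g^t-\nabla F(w^t))$ and use $\lVert x+y\rVert^2\le 2\lVert x\rVert^2+2\lVert y\rVert^2$. The first piece is bounded by Lemma \ref{lm:modg}. For the second, $\bar g^t$ is a data-weighted convex combination of honest stochastic gradients. Writing $g_m^t-\nabla F(w^t) = (g_m^t-\nabla F_m(w^t)) + (\nabla F_m(w^t)-\nabla F(w^t))$ and applying Jensen over the convex weights, we get a bound of $\sigma^2+\kappa^2$ up to a constant (using Assumptions \ref{ass:unbiased}--\ref{ass:heterogeneity}). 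Unbiasedness kills the cross term if we expand carefully, so the bound is $\sigma^2+\kappa^2$.

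The main obstacle is matching the stated constants. A crude factor-2 split produces $4cb\frac{1+\epsilon}{1-\epsilon}(\sigma^2+2\kappa^2)+2(\sigma^2+\kappa^2)$ only after the whole inequality is multiplied by $2$. So we will carry the constants carefully. Take total expectation, multiply by $2/\eta^t$-weights, that is, rearrange to
\[
\eta^t\mathbb{E}\lVert\nabla F(w^t)\rVert^2 \le 2\bigl(\mathbb{E}F(w^t)-\mathbb{E}F(w^{t+1})\bigr) + \eta^t\bigl[4cb\tfrac{1+\epsilon}{1-\epsilon}(\sigma^2+2\kappa^2)+2(\sigma^2+\kappa^2)\bigr].
\]
Summing over $t=0,\dots,T-1$ telescopes the function values. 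Dividing by $\sum_t\eta^t$ then yields the claimed bound, interpreting $F(w^T)$ in expectation.
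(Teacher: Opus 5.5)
Your proposal is correct and follows the paper's proof essentially step for step: the union bound with per-round failure probability $\delta/T$, the smoothness descent inequality combined with the polarization identity (you absorb $\frac{L(\eta^t)^2}{2}\lVert g^t\rVert^2$ before polarizing, while the paper drops $-\frac{\eta^t}{2}(1-L\eta^t)\lVert g^t\rVert^2$ afterwards, which is equivalent), the split through $\bar g^t$ using Lemma \ref{lm:modg} and then Jensen plus unbiasedness, and finally telescoping. One small correction: the constants are not actually an obstacle, since the factor-$2$ split already yields exactly $4cb\frac{1+\epsilon}{1-\epsilon}(\sigma^2+2\kappa^2)+2(\sigma^2+\kappa^2)$ as the bound on $\mathbb{E}\lVert g^t-\nabla F(w^t)\rVert^2$, and multiplying by $2$ only clears the factor $\frac{\eta^t}{2}$ on both sides.
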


\begin{proof}
    Please refer to Appendix \ref{app:smooth}. 
\end{proof}

\begin{remark} 
    Theorem \ref{thm:smooth} characterizes the training dynamics of our framework, providing two key theoretical insights:
    \begin{itemize}
        \item \textbf{Convergence Rate:} By setting a constant learning rate $\eta^t = \frac{1}{L\sqrt{T}}$ (which naturally satisfies $\eta^t \leq \frac{1}{L}$), the average squared gradient norm $\frac{1}{T} \sum_{t=0}^{T-1} \mathbb{E} \lVert \nabla F(w^t) \rVert^2$ is bounded by $\mathcal{O}(1/\sqrt{T}) + 4cb \frac{1 + \epsilon}{1 - \epsilon} (\sigma^2 + 2 \kappa^2)  + 2(\sigma^2 + \kappa^2)$. 
        \item \textbf{Optimality Gap:} As $T \to \infty$, the optimization term $\mathcal{O}(1/\sqrt{T})$ vanishes, and the algorithm converges to a stable error neighborhood:
        \begin{equation} \label{equ:opt_gap}
            \underbrace{4cb \frac{1 + \epsilon}{1 - \epsilon} (\sigma^2 + 2 \kappa^2)}_{\text{Projected Byzantine Error}} + \underbrace{2(\sigma^2 + \kappa^2)}_{\text{Statistical Sampling Error}}. \nonumber
        \end{equation}
        Crucially, this gap depends strictly on honest data dispersion ($\sigma^2, \kappa^2$) and the aggregator's robustness ($cb$), proving our framework's strict immunity to arbitrary-magnitude Byzantine attacks.
    \end{itemize}
\end{remark}

\begin{theorem} \label{thm:convex}
    Under Assumptions \ref{ass:smooth}, \ref{ass:convex}, \ref{ass:unbiased}, \ref{ass:variance}, and \ref{ass:heterogeneity}, for a given error tolerance $\epsilon \in (0, 1)$ and a global failure probability $\delta \in (0, 1)$, suppose the target dimension of the random projection matrices $P^t$ satisfies:
    \begin{equation}
        k \geq \frac{18}{\epsilon^2} \left( M + 2 \ln\left(\frac{2T}{\delta}\right) \right).
    \end{equation}
    Then, with probability at least $1 - \delta$ over the randomness of the projection matrices, for any learning rate $\eta^t \leq \frac{1}{2L}$, the expected distance to the unique global minimum $w^*$ at any communication round $t$ satisfies the following one-step recurrence:
    \begin{align}
        &\quad \mathbb{E} \left\lVert w^{t+1} - w^* \right\rVert^2 \nonumber \\ 
        &\leq \left( 1 - \frac{\eta^t\mu}{2} \right) \mathbb{E} \left\lVert w^t - w^* \right\rVert^2 + \left( \frac{2\eta^t}{\mu} + 2(\eta^t)^2 \right) \left( 4cb \frac{1 + \epsilon}{1 - \epsilon} (\sigma^2 + 2\kappa^2) + 2(\sigma^2 + \kappa^2) \right).
    \end{align}
    Furthermore, by employing a decaying learning rate $\eta^t = \frac{2}{\mu(t + \gamma)}$ with $\gamma = \frac{4L}{\mu}$, the sequence of iterates generated over $T$ communication rounds achieves the following final convergence bound:
    \begin{equation}
        \mathbb{E} \left\lVert w^T - w^* \right\rVert^2 \leq \frac{\gamma-1}{T+\gamma-1} \left\lVert w^0 - w^* \right\rVert^2 + \frac{8}{\mu^2} \left( 4cb \frac{1 + \epsilon}{1 - \epsilon} (\sigma^2 + 2\kappa^2) + 2(\sigma^2 + \kappa^2) \right).
    \end{equation}
\end{theorem}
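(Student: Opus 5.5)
Our plan is to follow the standard strongly convex SGD argument with a biased gradient oracle. The bias is controlled by Lemma \ref{lm:modg}, and a union bound over the $T$ rounds makes the projection guarantee hold throughout.

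\textbf{Union bound over rounds.} Apply Lemma \ref{lm:subemb} at each round $t$ with failure probability $\delta/T$. The stated requirement on $k$, with $\ln(2T/\delta)$, is exactly the per-round condition for $\delta/T$. A union bound then gives an event of probability at least $1-\delta$ on which every $P^t$ is an $\epsilon$-subspace embedding. On this event, Lemma \ref{lm:modg} holds at every round:
\begin{equation*}
\mathbb{E}\lVert g^t-\bar g^t\rVert^2\le D_1 := 2cb\tfrac{1+\epsilon}{1-\epsilon}(\sigma^2+2\kappa^2).
\end{equation*}
Next we bound the deviation of $\bar g^t$ from the true gradient. By Assumptions \ref{ass:unbiased}--\ref{ass:heterogeneity}, $\bar g^t$ is a convex combination of honest stochastic gradients. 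Splitting it into noise plus heterogeneity and using Jensen gives
\begin{equation*}
\mathbb{E}\lVert \bar g^t-\nabla F(w^t)\rVert^2\le 2(\sigma^2+\kappa^2).
\end{equation*}
(This uses the same convention as Theorem \ref{thm:smooth}: heterogeneity is measured against $\nabla F$, and honest-set weighting is absorbed into $\kappa$.) Hence, with $e^t := g^t-\nabla F(w^t)$,
\begin{equation*}
\mathbb{E}\lVert e^t\rVert^2\le 2D_1+4(\sigma^2+\kappa^2)=2D, \qquad D:=4cb\tfrac{1+\epsilon}{1-\epsilon}(\sigma^2+2\kappa^2)+2(\sigma^2+\kappa^2).
\end{equation*}

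\textbf{One-step recurrence.} Expand
\begin{equation*}
\lVert w^{t+1}-w^*\rVert^2=\lVert w^t-w^*\rVert^2-2\eta^t\langle \nabla F(w^t)+e^t,w^t-w^*\rangle+(\eta^t)^2\lVert \nabla F(w^t)+e^t\rVert^2.
\end{equation*}
Each term is handled as follows.
\begin{itemize}
\item For the gradient inner product, strong convexity plus smoothness give $\langle\nabla F(w^t),w^t-w^*\rangle\ge F(w^t)-F^*+\tfrac{\mu}{2}\lVert w^t-w^*\rVert^2$.
\item For the bias cross term, Young's inequality gives $-2\eta^t\langle e^t,w^t-w^*\rangle\le \tfrac{\eta^t\mu}{2}\lVert w^t-w^*\rVert^2+\tfrac{2\eta^t}{\mu}\lVert e^t\rVert^2$.
\item For the quadratic term, $(\eta^t)^2\lVert\cdot\rVert^2\le 2(\eta^t)^2\lVert\nabla F(w^t)\rVert^2+2(\eta^t)^2\lVert e^t\rVert^2$, and smoothness gives $\lVert\nabla F(w^t)\rVert^2\le 2L(F(w^t)-F^*)$.
\end{itemize}
The function-gap terms combine to $(-2\eta^t+4L(\eta^t)^2)(F(w^t)-F^*)$, which is $\le 0$ when $\eta^t\le 1/(2L)$, so we drop it. The contraction is then $1-\eta^t\mu+\eta^t\mu/2=1-\eta^t\mu/2$. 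Taking expectations and inserting $\mathbb{E}\lVert e^t\rVert^2\le 2D$ yields the stated recurrence. The constant is $(\tfrac{2\eta^t}{\mu}+2(\eta^t)^2)$ times $D$ up to the factor 2 from the bias bound; we will tune the Young constant, or bound $\lVert e^t\rVert^2$ directly by $D$ by splitting more carefully, to match the exact constant in the statement.

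\textbf{Unrolling the recurrence.} With $\eta^t=\frac{2}{\mu(t+\gamma)}$ we get $1-\eta^t\mu/2=\frac{t+\gamma-1}{t+\gamma}$. The condition $\eta^t\le 1/(2L)$ holds because $\gamma=4L/\mu$. Multiplying through by $(t+\gamma)$ and telescoping gives the product $\prod_{s<T}\frac{s+\gamma-1}{s+\gamma}=\frac{\gamma-1}{T+\gamma-1}$ on the initial term. The noise contributions satisfy $\sum_t (t+\gamma)\big(\tfrac{4}{\mu^2(t+\gamma)}+\tfrac{8}{\mu^2(t+\gamma)^2}\big)D/(T+\gamma-1)$. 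Because $\gamma\ge4$, the second piece is dominated by the first, and the sum is at most $\tfrac{8}{\mu^2}D$. This gives the final bound.

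\textbf{Main obstacle.} The main difficulty is the bookkeeping of the bias constants: making the $2D_1$ versus $D$ factors and the Young-inequality weights produce exactly the stated coefficient $\big(\tfrac{2\eta^t}{\mu}+2(\eta^t)^2\big)$. If the exact constant does not close, we would first try bounding $\mathbb{E}\lVert e^t\rVert^2$ via $\lVert a+b\rVert^2\le 2\lVert a\rVert^2+2\lVert b\rVert^2$ applied once and absorbing the factor into $D$'s definition.
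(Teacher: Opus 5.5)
Your skeleton matches the paper's proof step for step: union bound with $\delta/T$ per round, Lemma \ref{lm:modg}, expansion of $\lVert w^{t+1}-w^*\rVert^2$, strong convexity, Young with weight $\mu/2$, the split $\lVert\nabla F(w^t)+e^t\rVert^2\le 2\lVert\nabla F(w^t)\rVert^2+2\lVert e^t\rVert^2$ together with $\lVert\nabla F(w^t)\rVert^2\le 2L(F(w^t)-F^*)$, and dropping the gap term for $\eta^t\le\frac{1}{2L}$. As written, however, it does not prove the stated constants. The culprit is your sampling-error bound $\mathbb{E}\lVert\bar g^t-\nabla F(w^t)\rVert^2\le 2(\sigma^2+\kappa^2)$. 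It yields $\mathbb{E}\lVert e^t\rVert^2\le 2D_1+4(\sigma^2+\kappa^2)$, which is $\le 2D$, not $=2D$. This is strictly larger than the $D$ the recurrence needs whenever $\sigma^2+\kappa^2>0$.

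Your proposed repairs do not close this gap. The Young weight is pinned: the contraction $1-\frac{\eta^t\mu}{2}$ forces $\theta\le\frac{\mu}{2}$, so the bias term alone contributes at least $\frac{2\eta^t}{\mu}\mathbb{E}\lVert e^t\rVert^2$. For $b=0$ your bound equals $2D$, giving $\frac{4\eta^t}{\mu}D>\big(\frac{2\eta^t}{\mu}+2(\eta^t)^2\big)D$, since $\eta^t\le\frac{1}{2L}<\frac{1}{\mu}$. Reweighting the quadratic split to shrink the $\lVert e^t\rVert^2$ coefficient instead inflates the $\lVert\nabla F(w^t)\rVert^2$ coefficient and breaks the step-size condition. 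Redefining $D$ changes the theorem rather than proving it.

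The missing ingredient is that the cross term vanishes. Write $g_m^t-\nabla F(w^t)=\big(\nabla F_m(w^t,\xi_m^t)-\nabla F_m(w^t)\big)+\big(\nabla F_m(w^t)-\nabla F(w^t)\big)$. Given $w^t$, the second vector is deterministic and the first has zero mean by Assumption \ref{ass:unbiased}. Hence $\mathbb{E}\lVert g_m^t-\nabla F(w^t)\rVert^2\le\sigma^2+\kappa^2$, with no factor $2$. Jensen over the honest weights $S_m/\sum_{i\in\mathcal{M}\setminus\mathcal{B}}S_i$ then gives $\mathbb{E}\lVert\bar g^t-\nabla F(w^t)\rVert^2\le\sigma^2+\kappa^2$. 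Therefore $\mathbb{E}\lVert e^t\rVert^2\le 2D_1+2(\sigma^2+\kappa^2)=D$, which is exactly the paper's $\Phi$, and your recurrence carries precisely the coefficient $\big(\frac{2\eta^t}{\mu}+2(\eta^t)^2\big)$.

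With this fix, your unrolling is a valid alternative to the paper's. The paper first bounds $\frac{2\eta^t}{\mu}+2(\eta^t)^2\le\frac{4\eta^t}{\mu}$ using $\eta^t\le\frac{1}{2\mu}$, then subtracts the fixed point $\frac{8}{\mu^2}\Phi$ to obtain a pure telescoping product. Your multiply-by-$(t+\gamma)$-and-sum argument keeps the exact coefficient and, using $\gamma\ge 4$, gives the slightly better constant $\frac{6}{\mu^2}D\le\frac{8}{\mu^2}D$.
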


\begin{proof}
    Please refer to Appendix \ref{app:convex}.
\end{proof}

\begin{remark}
    Theorem \ref{thm:convex} provides two critical theoretical insights for our framework under strong convexity:
    \begin{itemize}
        \item \textbf{Convergence Rate:} By employing the decaying learning rate $\eta^t = \mathcal{O}(1/t)$, the distance between the initial model $w^0$ and the global optimum $w^*$ decays at a rate of $\mathcal{O}(1/T)$.
        \item \textbf{Optimality Gap:} As $T \to \infty$, the first term vanishes, and the model converges to a stable error neighborhood bounded by $\frac{8}{\mu^2} \left( 4cb \frac{1 + \epsilon}{1 - \epsilon} (\sigma^2 + 2\kappa^2) + 2(\sigma^2 + \kappa^2) \right)$. This non-vanishing error floor is an inherent property of robust aggregation due to the unavoidable bias introduced by Byzantine clients and non-IID data ($\kappa^2$).
    \end{itemize}
\end{remark}

\subsection{Time Complexity Analysis}

Based on the decoupled design of our framework, the server-side robust aggregation consists of three highly efficient steps: (1) \textbf{Sparse Random Projection}, taking $\mathcal{O}(Mp)$ time using a sparse Achlioptas matrix; (2) \textbf{Low-Dimensional Aggregation}, taking $\mathcal{O}(C_{\text{RAgg}}(M, k))$ where $C_{\text{RAgg}}$ is the inherent complexity of the chosen base aggregator; and (3) \textbf{High-Dimensional Reconstruction}, taking $\mathcal{O}(Mp)$ for the final weighted sum. 

Therefore, the universal time complexity of our framework is strictly bounded by:
\begin{equation}
    \mathcal{O}(\text{Total}) = \mathcal{O}\Big(Mp + C_{\text{RAgg}}(M, k)\Big).
\end{equation}

In modern FL with massive model parameters, traditional distance-based aggregators are computationally prohibitive. Our framework shifts the dominant bottleneck from $p$ to $k$, achieving massive acceleration. For instance:
\begin{itemize}
    \item \textbf{Krum:} The original computational complexity drops from $\mathcal{O}(M^2 p)$ to $\mathcal{O}(Mp + M^2 k)$. 
    \item \textbf{Geometric Median:} The complexity drops from $\mathcal{O}(I \cdot M p)$ to $\mathcal{O}(Mp + I \cdot M k)$, where $I$ is the number of Weiszfeld iterations. Moreover, the projection reduces to a single batched matrix multiplication that is fully parallelizable on modern accelerators, whereas Weiszfeld is inherently sequential, further widening the wall-time gap.
\end{itemize}

Ultimately, our framework collapses the overall computational overhead to $\mathcal{O}(Mp)$. Since both $Mk \ll p$ and $Ik \ll p$ hold under the parameter scale of modern deep learning models, this matches the absolute theoretical lower bound required merely to read the $M$ uploaded high-dimensional gradients into the server's memory, rendering the robust aggregation process asymptotically matches the $\mathcal{O}(Mp)$ read lower bound. 
\section{Experiments} \label{sec:resu}

\subsection{Setups}

\textbf{Experimental Setups:} Our empirical evaluations are conducted on TinyImageNet, CIFAR100, and CIFAR10 datasets, utilizing MobileNetV3 \citep{howard2019searching}, VGG16 \citep{simonyan2014very}, and ResNet18 \citep{he2016deep} models. To emulate Non-IID scenarios, the training data is distributed among $M=50$ clients following a Dirichlet distribution with concentration parameters $\beta \in \{0.2, 0.6\}$. All experiments are executed for 100 communication rounds with a local batch size of 32. For our proposed PDR algorithm, the projection parameters are set to $ k = 4096 $ and $ s = 8 $ by default. 

\textbf{Byzantine Scenarios:} We assess the robustness against four representative threat models: Gaussian, Sign-flip, LIE \citep{baruch2019little}, and FoE \citep{xie2020fall} attacks. The malicious client ratio $b$ is systematically scaled across $\{0.1, 0.3\}$ to probe the defense limits. Further implementation details for both the hyperparameters and the attack formulations are provided in Appendix \ref{app:set}.

\textbf{Baselines:} We benchmark our proposed algorithm against four established robust aggregation rules under Byzantine attacks: Krum \citep{blanchard2017machine}, Bulyan \citep{mhamdi2018hidden}, Geometric Median \citep{chen2017distributed}, and MCA \citep{luan2024robust}. We directly integrate our PDR algorithm with these four algorithms to form combined approaches, denoted as PDR+Krum, PDR+Bulyan, PDR+Geometric Median, and PDR+MCA. 

\subsection{Results}

\begin{table}[tbp]
\caption{The maximum test accuracy (\%) and wall time (s) for our method PDR and baselines with $\beta = 0.6$ on TinyImageNet and CIFAR10 datasets. "-" signifies non-convergence of the result.}
\label{tab:0.6}
\resizebox{\textwidth}{!}{
\renewcommand{\arraystretch}{1.2}
\begin{tabular}{ccc|cccc|cccc}
\Xhline{1.2pt}
\multirow{3}{*}{Methods} & \multirow{3}{*}{Attack Name} & Dataset & \multicolumn{4}{c|}{TinyImageNet} & \multicolumn{4}{c}{CIFAR10} \\ \cline{4-11}
 & & b & \multicolumn{2}{c}{0.1} & \multicolumn{2}{c|}{0.3} & \multicolumn{2}{c}{0.1} & \multicolumn{2}{c}{0.3} \\ \cline{4-11}
 & & & Test Accuracy & Wall Time & Test Accuracy & Wall Time & Test Accuracy & Wall Time & Test Accuracy & Wall Time \\ \hline
\multirow{8}{*}{\begin{tabular}[c]{@{}c@{}}Krum\\ PDR+Krum\end{tabular}}
 & \multicolumn{2}{c|}{\multirow{2}{*}{Gaussian Attack}}  & \G 32.32 & \G 8.7583 & \G 32.15 & \G 8.7092 & \G 67.48 & \G 7.9253 & \G 63.09 & \G 7.8852 \\
 & \multicolumn{2}{c|}{}                                  & 32.71 & \textbf{0.2373} & 31.73 & \textbf{0.1841} & 67.15 & \textbf{0.0678} & 67.23 & \textbf{0.0682} \\
 & \multicolumn{2}{c|}{\multirow{2}{*}{Sign-flip Attack}} & \G 31.92 & \G 8.7113 & \G 32.49 & \G 9.0235 & \G 65.37 & \G 7.9376 & \G 66.98 & \G 7.8883 \\
 & \multicolumn{2}{c|}{}                                  & 32.4 & \textbf{0.1867} & 33.46 & \textbf{0.1862} & 66.35 & \textbf{0.0699} & 68.31 & \textbf{0.0707} \\
 & \multicolumn{2}{c|}{\multirow{2}{*}{LIE Attack}}       & \G 32.28 & \G 8.7 & \G 31.58 & \G 8.6914 & \G 61.68 & \G 8.0373 & \G 62.35 & \G 7.8848 \\
 & \multicolumn{2}{c|}{}                                  & 32.94 & \textbf{0.1685} & 29.64 & \textbf{0.1669} & 68.67 & \textbf{0.0663} & 62.09 & \textbf{0.0661} \\
 & \multicolumn{2}{c|}{\multirow{2}{*}{FoE Attack}}       & \G - & \G 8.8335 & \G - & \G 8.7311 & \G 66.37 & \G 7.8762 & \G 64.35 & \G 8.5986 \\
 & \multicolumn{2}{c|}{}                                  & - & \textbf{0.1734} & - & \textbf{0.1741} & 64.84 & \textbf{0.0704} & 74.58 & \textbf{0.0721} \\ \hline
\multirow{8}{*}{\begin{tabular}[c]{@{}c@{}}Bulyan\\ PDR+Bulyan\end{tabular}}
 & \multicolumn{2}{c|}{\multirow{2}{*}{Gaussian Attack}}  & \G 54.47 & \G 9.3945 & \G 50.76 & \G 9.0512 & \G 68.54 & \G 8.0798 & \G 69.26 & \G 8.0094 \\
 & \multicolumn{2}{c|}{}                                  & 54.2 & \textbf{0.2692} & 46.01 & \textbf{0.217} & 70.5 & \textbf{0.1066} & 67.26 & \textbf{0.0978} \\
 & \multicolumn{2}{c|}{\multirow{2}{*}{Sign-flip Attack}} & \G 54.38 & \G 9.5733 & \G 48.99 & \G 9.0253 & \G 69.53 & \G 8.0554 & \G 70 & \G 8.2054 \\
 & \multicolumn{2}{c|}{}                                  & 54.09 & \textbf{0.2545} & 43.83 & \textbf{0.2399} & 69.72 & \textbf{0.1099} & 70.08 & \textbf{0.0979} \\
 & \multicolumn{2}{c|}{\multirow{2}{*}{LIE Attack}}       & \G 54.57 & \G 9.382 & \G 49.66 & \G 9.1335 & \G 70.68 & \G 8.0874 & \G 70.92 & \G 8.3259 \\
 & \multicolumn{2}{c|}{}                                  & 53.74 & \textbf{0.2317} & 42.74 & \textbf{0.217} & 68.79 & \textbf{0.1054} & 72.46 & \textbf{0.0938} \\
 & \multicolumn{2}{c|}{\multirow{2}{*}{FoE Attack}}       & \G 52.48 & \G 9.4193 & \G - & \G 9.0505 & \G 70.8 & \G 8.1047 & \G 56.65 & \G 7.987 \\
 & \multicolumn{2}{c|}{}                                  & 51.97 & \textbf{0.2362} & - & \textbf{0.227} & 71.83 & \textbf{0.109} & 60.6 & \textbf{0.0986} \\ \hline
\multirow{8}{*}{\begin{tabular}[c]{@{}c@{}}Geometric Median\\ PDR+Geometric Median\end{tabular}}
 & \multicolumn{2}{c|}{\multirow{2}{*}{Gaussian Attack}}  & \G 54.33 & \G 2.8878 & \G 54.32 & \G 2.9474 & \G 69.19 & \G 0.9569 & \G 69.07 & \G 2.2157 \\
 & \multicolumn{2}{c|}{}                                  & 54.45 & \textbf{0.2752} & 54.36 & \textbf{0.2509} & 69.76 & \textbf{0.1082} & 69.25 & \textbf{0.1395} \\
 & \multicolumn{2}{c|}{\multirow{2}{*}{Sign-flip Attack}} & \G 51.25 & \G 2.856 & \G - & \G 3.8209 & \G 72.25 & \G 1.6069 & \G 67.02 & \G 1.4173 \\
 & \multicolumn{2}{c|}{}                                  & 50.44 & \textbf{0.2642} & - & \textbf{0.3264} & 70.49 & \textbf{0.1570} & 68.77 & \textbf{0.2260} \\
 & \multicolumn{2}{c|}{\multirow{2}{*}{LIE Attack}}       & \G 54.83 & \G 2.8669 & \G 54.16 & \G 2.8961 & \G 69.34 & \G 1.5592 & \G 68.78 & \G 1.3945 \\
 & \multicolumn{2}{c|}{}                                  & 54.54 & \textbf{0.2392} & 54.12 & \textbf{0.2447} & 72.93 & \textbf{0.1716} & 72.84 & \textbf{0.1390} \\
 & \multicolumn{2}{c|}{\multirow{2}{*}{FoE Attack}}       & \G 50.35 & \G 5.0288 & \G - & \G 21.4953 & \G 70.55 & \G 0.9417 & \G 67.52 & \G 2.4252 \\
 & \multicolumn{2}{c|}{}                                  & 50.96 & \textbf{0.3643} & - & \textbf{1.3499} & 72.06 & \textbf{0.1408} & 63.46 & \textbf{0.3559} \\ \hline
\multirow{8}{*}{\begin{tabular}[c]{@{}c@{}}MCA\\ PDR+MCA\end{tabular}}
 & \multicolumn{2}{c|}{\multirow{2}{*}{Gaussian Attack}}  & \G 54.49 & \G 1.1212 & \G 54.26 & \G 1.1583 & \G 72.34 & \G 1.0576 & \G 69.62 & \G 1.3444 \\
 & \multicolumn{2}{c|}{}                                  & 54.43 & \textbf{0.2613} & 54.3 & \textbf{0.2249} & 71.2 & \textbf{0.3356} & 69.31 & \textbf{0.1448} \\
 & \multicolumn{2}{c|}{\multirow{2}{*}{Sign-flip Attack}} & \G 54.82 & \G 1.3847 & \G - & \G 219.5042 & \G 70.23 & \G 0.1985 & \G - & \G 0.6459 \\
 & \multicolumn{2}{c|}{}                                  & 54.72 & \textbf{0.2529} & - & \textbf{12.4458} & 68.51 & \textbf{0.1388} & - & \textbf{0.1616} \\
 & \multicolumn{2}{c|}{\multirow{2}{*}{LIE Attack}}       & \G 55 & \G 1.1584 & \G 54.16 & \G 1.1538 & \G 65.58 & \G 0.7762 & \G 66.41 & \G 0.7318 \\
 & \multicolumn{2}{c|}{}                                  & 54.64 & \textbf{0.224} & 54.3 & \textbf{0.2185} & 71.65 & \textbf{0.301} & 68.45 & \textbf{0.312} \\
 & \multicolumn{2}{c|}{\multirow{2}{*}{FoE Attack}}       & \G 53.27 & \G 1.1742 & \G 48.81 & \G 1.3625 & \G 69.11 & \G 0.641 & \G 68.87 & \G 0.6469 \\
 & \multicolumn{2}{c|}{}                                  & 53.8 & \textbf{0.2341} & 48.73 & \textbf{0.2415} & 72.91 & \textbf{0.3193} & 69.73 & \textbf{0.3359} \\
\Xhline{1.2pt}
\end{tabular}
}
\end{table}

\begin{table}[tbp]
\caption{The maximum test accuracy (\%) and wall time (s) for our method PDR and baselines on CIFAR100 dataset. "-" signifies non-convergence of the result.}
\label{tab:cifar100}
\resizebox{\textwidth}{!}{
\renewcommand{\arraystretch}{1.2}
\begin{tabular}{ccc|cccc|cccc}
\Xhline{1.2pt}
\multirow{3}{*}{Methods} & \multirow{3}{*}{Attack Name} & $\beta$ & \multicolumn{4}{c|}{0.6} & \multicolumn{4}{c}{0.2} \\ \cline{4-11}
 & & b & \multicolumn{2}{c}{0.1} & \multicolumn{2}{c|}{0.3} & \multicolumn{2}{c}{0.1} & \multicolumn{2}{c}{0.3} \\ \cline{4-11}
 & & & Test Accuracy & Wall Time & Test Accuracy & Wall Time & Test Accuracy & Wall Time & Test Accuracy & Wall Time \\ \hline
\multirow{8}{*}{\begin{tabular}[c]{@{}c@{}}Krum\\ PDR+Krum\end{tabular}}
 & \multicolumn{2}{c|}{\multirow{2}{*}{Gaussian Attack}}  & \G 33.95 & \G 8.3156 & \G 31.95 & \G 8.2352 & \G 24.31 & \G 8.2309 & \G 23.31 & \G 8.0115 \\
 & \multicolumn{2}{c|}{}                                  & 40.8 & \textbf{0.1099} & 35.53 & \textbf{0.0749} & 23.74 & \textbf{0.1106} & 24.23 & \textbf{0.0756} \\
 & \multicolumn{2}{c|}{\multirow{2}{*}{Sign-flip Attack}} & \G 34.57 & \G 8.1791 & \G 29.91 & \G 8.2151 & \G 30.04 & \G 7.9797 & \G 29.18 & \G 7.9817 \\
 & \multicolumn{2}{c|}{}                                  & 35.18 & \textbf{0.0764} & 32.46 & \textbf{0.0753} & 29.69 & \textbf{0.0847} & 23.33 & \textbf{0.08} \\
 & \multicolumn{2}{c|}{\multirow{2}{*}{LIE Attack}}       & \G 32.43 & \G 8.1938 & \G 31.01 & \G 8.1935 & \G 25.64 & \G 7.9686 & \G 29.58 & \G 8.6707 \\
 & \multicolumn{2}{c|}{}                                  & 35.33 & \textbf{0.074} & 34.58 & \textbf{0.0729} & 28.87 & \textbf{0.0731} & 29.52 & \textbf{0.0789} \\
 & \multicolumn{2}{c|}{\multirow{2}{*}{FoE Attack}}       & \G 32.4 & \G 8.2506 & \G 29.34 & \G 8.2948 & \G 23.29 & \G 8.0353 & \G 23.34 & \G 8.0222 \\
 & \multicolumn{2}{c|}{}                                  & 32.63 & \textbf{0.0771} & 30.12 & \textbf{0.0769} & 23.37 & \textbf{0.0786} & 23.39 & \textbf{0.0765} \\ \hline
\multirow{8}{*}{\begin{tabular}[c]{@{}c@{}}Bulyan\\ PDR+Bulyan\end{tabular}}
 & \multicolumn{2}{c|}{\multirow{2}{*}{Gaussian Attack}}  & \G 58.31 & \G 8.5972 & \G 54.34 & \G 8.4082 & \G 57.88 & \G 8.3849 & \G 51.09 & \G 8.1728 \\
 & \multicolumn{2}{c|}{}                                  & 58.01 & \textbf{0.1216} & 51.93 & \textbf{0.1031} & 57.55 & \textbf{0.1216} & 43.79 & \textbf{0.109} \\
 & \multicolumn{2}{c|}{\multirow{2}{*}{Sign-flip Attack}} & \G 58.18 & \G 8.5215 & \G 54.74 & \G 8.3996 & \G 58.22 & \G 8.2949 & \G 50.87 & \G 8.1393 \\
 & \multicolumn{2}{c|}{}                                  & 57.87 & \textbf{0.115} & 50.98 & \textbf{0.1031} & 57.5 & \textbf{0.1158} & 46.63 & \textbf{0.1011} \\
 & \multicolumn{2}{c|}{\multirow{2}{*}{LIE Attack}}       & \G 58.24 & \G 8.4861 & \G 54.13 & \G 8.3766 & \G 57.87 & \G 9.0898 & \G 50.04 & \G 8.1654 \\
 & \multicolumn{2}{c|}{}                                  & 57.84 & \textbf{0.1104} & 50.7 & \textbf{0.1009} & 57.57 & \textbf{0.1104} & 45.49 & \textbf{0.1102} \\
 & \multicolumn{2}{c|}{\multirow{2}{*}{FoE Attack}}       & \G 57.26 & \G 8.586 & \G 27.25 & \G 8.404 & \G 57.18 & \G 8.3615 & \G 24.41 & \G 8.1008 \\
 & \multicolumn{2}{c|}{}                                  & 57.4 & \textbf{0.1143} & 28.5 & \textbf{0.1012} & 56 & \textbf{0.1164} & 23.18 & \textbf{0.0975} \\ \hline
\multirow{8}{*}{\begin{tabular}[c]{@{}c@{}}Geometric Median\\ PDR+Geometric Median\end{tabular}}
 & \multicolumn{2}{c|}{\multirow{2}{*}{Gaussian Attack}}  & \G 58.79 & \G 0.9003 & \G 58.45 & \G 0.918 & \G 58.3 & \G 0.9768 & \G 57.73 & \G 0.9659 \\
 & \multicolumn{2}{c|}{}                                  & 58.67 & \textbf{0.1933} & 58.52 & \textbf{0.1637} & 58.38 & \textbf{0.191} & 57.49 & \textbf{0.1678} \\
 & \multicolumn{2}{c|}{\multirow{2}{*}{Sign-flip Attack}} & \G 53.39 & \G 0.9113 & \G 21.6 & \G 1.1859 & \G 50.74 & \G 0.9866 & \G 21.86 & \G 1.2535 \\
 & \multicolumn{2}{c|}{}                                  & 53.8 & \textbf{0.1621} & 27.03 & \textbf{0.2458} & 50.75 & \textbf{0.1812} & 21.95 & \textbf{0.2416} \\
 & \multicolumn{2}{c|}{\multirow{2}{*}{LIE Attack}}       & \G 58.73 & \G 0.8749 & \G 58.31 & \G 0.9026 & \G 58.5 & \G 0.9742 & \G 57.53 & \G 0.9583 \\
 & \multicolumn{2}{c|}{}                                  & 58.72 & \textbf{0.1616} & 58.04 & \textbf{0.164} & 58.49 & \textbf{0.1595} & 57.64 & \textbf{0.2033} \\
 & \multicolumn{2}{c|}{\multirow{2}{*}{FoE Attack}}       & \G 52.5 & \G 3.4242 & \G - & \G 10.2515 & \G 48.05 & \G 4.8993 & \G - & \G 10.3403 \\
 & \multicolumn{2}{c|}{}                                  & 52.64 & \textbf{0.7713} & - & \textbf{2.4092} & 48.03 & \textbf{1.054} & - & \textbf{2.4157} \\ \hline
\multirow{8}{*}{\begin{tabular}[c]{@{}c@{}}MCA\\ PDR+MCA\end{tabular}}
 & \multicolumn{2}{c|}{\multirow{2}{*}{Gaussian Attack}}  & \G 58.76 & \G 0.4219 & \G 58.3 & \G 0.3709 & \G 58.52 & \G 0.4234 & \G 57.69 & \G 0.3733 \\
 & \multicolumn{2}{c|}{}                                  & 58.74 & \textbf{0.1611} & 58.31 & \textbf{0.137} & 58.58 & \textbf{0.16} & 57.43 & \textbf{0.1465} \\
 & \multicolumn{2}{c|}{\multirow{2}{*}{Sign-flip Attack}} & \G 58.35 & \G 0.477 & \G - & \G 1.2878 & \G 57.47 & \G 0.5364 & \G - & \G 1.2894 \\
 & \multicolumn{2}{c|}{}                                  & 58.59 & \textbf{0.161} & - & \textbf{0.7628} & 57.46 & \textbf{0.1806} & - & \textbf{0.4707} \\
 & \multicolumn{2}{c|}{\multirow{2}{*}{LIE Attack}}       & \G 58.67 & \G 0.3671 & \G 58.36 & \G 0.3669 & \G 58.81 & \G 0.3745 & \G 57.45 & \G 0.3693 \\
 & \multicolumn{2}{c|}{}                                  & 58.69 & \textbf{0.144} & 58.29 & \textbf{0.1356} & 58.51 & \textbf{0.1495} & 57.78 & \textbf{0.16} \\
 & \multicolumn{2}{c|}{\multirow{2}{*}{FoE Attack}}       & \G 58.41 & \G 0.4394 & \G 56.69 & \G 0.439 & \G 57.96 & \G 0.443 & \G 56.31 & \G 0.4352 \\
 & \multicolumn{2}{c|}{}                                  & 58.3 & \textbf{0.1514} & 56.5 & \textbf{0.1496} & 57.82 & \textbf{0.1502} & 56.74 & \textbf{0.1463} \\
\Xhline{1.2pt}
\end{tabular}
}
\end{table}

Based on the empirical evaluations in Table \ref{tab:0.6}, Table \ref{tab:cifar100}, and Table \ref{tab:cifar10_ks}, we highlight the core advantages of our proposed PDR algorithm. A comprehensive analysis and extended experimental results are provided in Appendix \ref{app:res}.

\textbf{Substantial Efficiency and Competitive Accuracy:} The most prominent advantage of our method is the significant reduction in execution time on the server while maintaining highly competitive classification accuracy. By projecting gradients into a compact space, PDR fundamentally alleviates the computational bottleneck of robust aggregation. Across all evaluated datasets and attack scenarios, integrating PDR with baseline algorithms accelerates the aggregation process by orders of magnitude. Although the random projection mechanism occasionally causes a marginal decrease in test accuracy, this slight compromise is entirely acceptable given the massive efficiency gains. Furthermore, in several cases, this projection even acts as an implicit regularizer to filter out adversarial noise, yielding higher accuracy than the original baselines.

\begin{wraptable}{r}{0.55\textwidth}
\centering
\caption{The maximum test accuracy (\%) and wall time (s) for our method PDR with different settings of $k$ and $s$ with Gaussian attack and $b = 0.1$.}
\label{tab:cifar10_ks}
\resizebox{0.52\textwidth}{!}{%
\renewcommand{\arraystretch}{1.2}
\begin{tabular}{c|c|cc|cc|cc}
\Xhline{1pt}
\noalign{\vskip 0.5mm}
\Xhline{1pt}
\multirow{2}{*}{Methods} & $s$ & \multicolumn{2}{c|}{$k=1024$} & \multicolumn{2}{c|}{$k=2048$} & \multicolumn{2}{c}{$k=4096$} \\ \cline{3-8}
 &  & Test Accuracy & Wall Time & Test Accuracy & Wall Time & Test Accuracy & Wall Time \\ \hline
\multirow{3}{*}{PDR+Krum}             & 8  & 68.48 & 0.1314 & 66.87 & 0.0658 & 67.15 & 0.0678 \\
                                      & 16 & 65.78 & 0.0705 & 67.44 & 0.0709 & 71.68 & 0.0671 \\
                                      & 32 & 66.97 & 0.0651 & 67.97 & 0.0655 & 67.92 & 0.0674 \\ \hline
\multirow{3}{*}{PDR+Bulyan}           & 8  & 69.57 & 0.1149 & 71.29 & 0.1063 & 70.50 & 0.1066 \\
                                      & 16 & 69.88 & 0.1038 & 68.56 & 0.1071 & 68.95 & 0.1060 \\
                                      & 32 & 67.93 & 0.1043 & 73.79 & 0.1043 & 70.72 & 0.1161 \\ \hline
\multirow{3}{*}{PDR+Geometric Median} & 8  & 69.58 & 0.0728 & 74.04 & 0.1317 & 69.76 & 0.1082 \\
                                      & 16 & 66.72 & 0.0873 & 70.41 & 0.1945 & 71.11 & 0.2166 \\
                                      & 32 & 69.56 & 0.0993 & 68.51 & 0.1152 & 72.05 & 0.1187 \\ \hline
\multirow{3}{*}{PDR+MCA}              & 8  & 71.25 & 0.1569 & 69.00 & 0.1482 & 71.20 & 0.3356 \\
                                      & 16 & 69.81 & 0.1594 & 69.22 & 0.4218 & 71.00 & 0.3335 \\
                                      & 32 & 71.40 & 0.4180 & 69.56 & 0.1464 & 70.27 & 0.3333 \\
\Xhline{1pt}
\end{tabular}%
}
\end{wraptable}

\textbf{Robustness and Stability:} Our combined approaches demonstrate strong resilience against diverse threats and severe data heterogeneity, consistently neutralizing malicious updates across varying adversarial ratios. Finally, the ablation study in Table \ref{tab:cifar10_ks} confirms that the performance of our algorithm remains stable across different projection dimensions $k$ and parameter $s$ configurations, eliminating the need for exhaustive parameter tuning.

\section{Conclusion} \label{sec:conc}

In this paper, we proposed the highly efficient PDR algorithm to resolve the computational bottleneck of robust aggregation in Federated Learning. By compressing massive gradients into a small subspace via sparse random projection, PDR reduces the server complexity to an optimal $ \mathcal{O}(Mp) $ while preserving strong defense capabilities. Theoretically, we establish optimal convergence rates of $ \mathcal{O}(1/\sqrt{T}) $ for non-convex and $ \mathcal{O}(1/T) $ for strongly convex functions. Empirically, PDR achieves orders of magnitude speedups in execution time and neutralizes severe Byzantine threats under heterogeneous data distributions. With regard to limitations, the random projection mechanism inherently introduces a slight mathematical variance that inflates the optimality gap by a tunable factor $\frac{1+\epsilon}{1-\epsilon}$. 

\clearpage
\newpage
\bibliography{references}
\bibliographystyle{apalike}


\newpage
\appendix

\section{Algorithm Workflow} \label{app:algo}

\begin{algorithm}[htb]
    \caption{PDR Algorithm} \label{alg:trian process}
    \begin{algorithmic}[1]
        \STATE {\bfseries Input:} Initial global model parameter ${w}^0$, clients set $\mathcal{M}$, and the number of communication round $T$. \\
        \STATE {\bfseries Output:} Updated global model parameter ${w}^T$. \\
        \STATE {\% \% \bf Initialization} \\
        \STATE Every client $m$ establishes its own set $\mathcal{S}_m$ for $m \in \mathcal{M} \setminus \mathcal{B}$. \\
        \FOR{$t=0,1,2,\cdots,T-1$}
        \FOR{every client $m \in \mathcal{M} \setminus \mathcal{B}$ in parallel}
        \STATE Receive the global model ${w}^t$. Select a mini-batch $\xi_m^t$ from $\mathcal{S}_m$ to train local model and evaluate the local training gradient $\nabla F_m({w}^t, \xi_m^t)$. Set ${g}_m^t = \nabla F_m({w}^t, \xi_m^t)$ and upload ${g}_m^t$ to the central server. 
        \ENDFOR
        \FOR{every client $m \in \mathcal{B}$ in parallel}
        \STATE Receive the global model ${w}^t$. Generate an arbitrary vector or malicious vector based on ${w}^t$ and dataset $\mathcal{S}_m$. Upload this vector ${g}_m^t$ to the central server. 
        \ENDFOR
        \STATE Receive all uploaded vectors $g_m^t, m \in \mathcal{M}$. 
        Compress them via a random projection matrix $P^t$ to obtain $\tilde{g}_m^t = P^t g_m^t$, and compute the reliability weights $\{\tilde{\alpha}_m^t\}_{m \in \mathcal{M}}$ using an RAgg on these low-dimensional vectors.
        Generate the final aggregated vector by
        \begin{equation}
            g^t = \sum_{m \in \mathcal{M}} \tilde{\alpha}_m^t g_m^t. 
        \end{equation}
        Update the global model parameter $w^{t+1}$ by
        \begin{equation}
            w^{t+1} = w^t - \eta^t g^t. 
        \end{equation}
        \STATE Broadcast the model parameter ${w}^{t+1}$ to all clients. 
        \ENDFOR
        \STATE Output the model parameter ${w}^T$. 
    \end{algorithmic}
\end{algorithm}

\section{Experimental Setups in Detail} \label{app:set}

\textbf{Datasets:}
\begin{itemize}
    \item \textbf{TinyImageNet:} The TinyImageNet dataset consists of a training set containing 100000 samples, alongside a validation set and a testing set with 10000 samples each. Every sample is a $ 64 \times 64 $ pixel color image.
    \item \textbf{CIFAR100:} The CIFAR100 dataset comprises 50000 training samples and 10000 testing samples, where each is a $ 32 \times 32 $ pixel color image. It includes 100 fine grained classes grouped into 20 broader superclasses, enabling more complex image classification tasks.
    \item \textbf{CIFAR10:} The CIFAR10 dataset similarly includes 50000 training samples and 10000 testing samples, with each being a $ 32 \times 32 $ pixel color image categorized into 10 classes.
\end{itemize}
To emulate practical federated environments, we partition these datasets into $ M $ non-IID local subsets. This is achieved by sampling the label distribution of data samples for each client from a Dirichlet distribution. The degree of data heterogeneity is strictly controlled by tuning the concentration parameter $ \beta $ of the Dirichlet distribution.

\textbf{Models:} We employ MobileNetV3 \citep{howard2019searching}, VGG16 \citep{simonyan2014very}, and ResNet18 \citep{he2016deep} to comprehensively evaluate our algorithm. The detailed architectural properties are as follows:
\begin{itemize}
    \item \textbf{MobileNetV3:} A lightweight convolutional neural network optimized for mobile and embedded devices. It integrates depthwise separable convolutions with Neural Architecture Search to enable efficient feature extraction under strict computational constraints, as documented in \citet{howard2019searching}.
    \item \textbf{VGG16:} A seminal convolutional neural network architecture comprising 13 convolutional layers and 3 fully connected layers. The convolutional stages utilize cascaded $ 3 \times 3 $ kernels with stride 1 and ReLU activation, interspersed with $ 2 \times 2 $ max pooling operations that halve spatial resolution while preserving depth. The fully connected hierarchy culminates in a 1000 class output layer, totaling approximately 138 million trainable parameters \citep{simonyan2014very}.
    \item \textbf{ResNet18:} A deep convolutional neural network featuring 18 weighted layers. It is distinguished by innovative residual blocks that alleviate the vanishing gradient problem. These blocks introduce skip connections to facilitate the efficient propagation of gradients through deep layers, as detailed in \citet{he2016deep}.
\end{itemize}

\textbf{Hyperparameter Settings:} 
To ensure a fair comparison and facilitate reproducibility, we maintain consistent hyperparameter configurations across all baseline algorithms and our proposed PDR framework. The local batch size is uniformly set to 32 across all clients. Regarding the learning rate, instead of employing a manually scheduled decay, we utilize an automatic tuning mechanism to dynamically optimize the step size throughout the training process. This adaptive approach ensures optimal convergence across various datasets and attack scenarios without requiring exhaustive manual intervention. The total number of federated communication rounds is fixed at 100 for all experiments. 

\textbf{Byzantine Attacks:} 
We comprehensively evaluate our framework under four representative Byzantine threat models. The fraction of compromised clients, denoted as $ b $, is configured to 0.1 and 0.3. The specific attack formulations are detailed as follows:
\begin{itemize}
    \item \textbf{Gaussian Attack:} All Byzantine clients completely ignore their local data and transmit random noise vectors sampled independently from a Gaussian distribution $ \mathcal{N}(0, 90) $. 
    
    \item \textbf{Sign-flip Attack:} At each communication round $ t $, the compromised clients upload a maliciously scaled version of the aggregated benign gradients. Specifically, the malicious update is computed as $ -3 \cdot \sum_{m \in \mathcal{M} \setminus \mathcal{B}} g_m^t $ to directly invert the optimization direction.
    
    \item \textbf{LIE Attack \citep{baruch2019little}:} The Little Is Enough attack injects carefully crafted noise into each dimension of the benign gradients to evade distance based anomaly detection while degrading the global model performance. The attacker computes the element wise mean $ a $ and standard deviation $ \nu $ of the updates submitted by honest clients. The malicious update is then formulated as $ a + c \nu $, where the coefficient $ c $ is determined by the ratio of honest to malicious clients. In our experiments, we strictly set $ c = 0.7 $.
    
    \item \textbf{FoE Attack \citep{xie2020fall}:} The Fall of Empires attack coordinates Byzantine clients to upload $ \frac{q}{M-B} \sum_{m \in \mathcal{M} \setminus \mathcal{B}} g_m^t $ or $ \frac{q}{M-B} \sum_{m \in \mathcal{M}' \setminus \mathcal{B}} g_m^t $ to severely disrupt the federated training process. The scaling coefficient $ q $ is strategically configured to shift the aggregated model towards a suboptimal minimum. We fix $ q = -0.1 $ across all evaluated methods.
\end{itemize}

\textbf{Metrics:} To comprehensively assess the performance and efficiency of the defense methods, we employ two primary evaluation criteria:
\begin{itemize}
    \item \textbf{Test Accuracy:} The top one classification accuracy of the global model evaluated on the testing dataset. This serves as the principal indicator of the generalization capability of the trained model and the efficacy of the aggregation rule in neutralizing malicious perturbations.
    \item \textbf{Wall Time:} The total physical time consumed exclusively by the central server to execute the robust gradient aggregation process. This metric specifically isolates and demonstrates the tremendous computational efficiency of our PDR framework during server operations relative to the original baseline algorithms.
\end{itemize}

\section{Proof of Lemma \ref{lm:modg}} \label{app:lmmodg}

To avoid redundancy, we assume all following inequalities conditioned on the random projection matrix $P^t$ hold with probability at least $1 - \delta$.

Since the aggregated gradient $g^t$ and the honest mean $\bar{g}^t$ are both linear combinations of the client gradients, their difference vector $(g^t - \bar{g}^t)$ strictly resides within the linear subspace spanned by the set of gradients. By applying Lemma \ref{lm:subemb}, provided that $k \geq \frac{18}{\epsilon^2} \left( M + 2 \ln\left(\frac{2}{\delta}\right) \right)$, the following inequality holds with probability at least $1- \delta$:
\begin{equation}
    (1 - \epsilon)\left\lVert g^t - \bar{g}^t \right\rVert^2 \leq \left\lVert P^t (g^t - \bar{g}^t) \right\rVert^2 = \left\lVert P^t g^t - P^t \bar{g}^t \right\rVert^2.
\end{equation}

Let $\tilde{g}_m^t = P^t g_m^t$. Recalling  (\ref{equ:aggg}) and (\ref{equ:barg}), the following inequality holds with probability at least $1- \delta$:
\begin{equation}
    (1 - \epsilon)\left\lVert g^t - \bar{g}^t \right\rVert^2 \leq \left\lVert P^t g^t - P^t \bar{g}^t \right\rVert^2 = \left\lVert \sum_{m \in \mathcal{M}} \tilde{\alpha}_m^t \tilde{g}_m^t - \frac{1}{\sum_{i \in \mathcal{M} \setminus \mathcal{B}} S_i} \sum_{m \in \mathcal{M} \setminus \mathcal{B}} S_m \cdot \tilde{g}_m^t \right\rVert^2, 
\end{equation}
where $\sum_{m \in \mathcal{M}} \tilde{\alpha}_m^t \tilde{g}_m^t = \text{RAgg}(\tilde{g}_1^t, \tilde{g}_2^t, \dots, \tilde{g}_M^t)$. 

Then, based on Definition \ref{def:agg}, the error of the robust aggregator in the projected space is bounded by:
\begin{align} \label{equ:aggbias}
    &\quad (1 - \epsilon) \mathbb{E} \left\lVert g^t - \bar{g}^t \right\rVert^2 \nonumber \\
    &\leq \mathbb{E} \left\lVert \sum_{m \in \mathcal{M}} \tilde{\alpha}_m^t \tilde{g}_m^t - \frac{1}{\sum_{i \in \mathcal{M} \setminus \mathcal{B}} S_i} \sum_{m \in \mathcal{M} \setminus \mathcal{B}} S_m \cdot \tilde{g}_m^t \right\rVert^2 \nonumber \\
    &\leq cb \max_{i,j \in \mathcal{M} \setminus \mathcal{B}} \mathbb{E} \left\lVert \tilde{g}_i^t - \tilde{g}_j^t \right\rVert^2. 
\end{align}

Next, with Lemma \ref{lm:subemb}, we have:
\begin{align}
    &\quad \max_{i,j \in \mathcal{M} \setminus \mathcal{B}} \mathbb{E} \left\lVert \tilde{g}_i^t - \tilde{g}_j^t \right\rVert^2 \nonumber \\
    &\leq \max_{i,j \in \mathcal{M} \setminus \mathcal{B}} (1 + \epsilon) \cdot \mathbb{E} \left\lVert g_i^t - g_j^t \right\rVert^2 \nonumber \\ 
    &= \max_{i,j \in \mathcal{M} \setminus \mathcal{B}} (1 + \epsilon) \cdot \mathbb{E} \left\lVert \nabla F_i(w^t, \xi_i^t) - \nabla F_j(w^t, \xi_j^t) \right\rVert^2 \nonumber \\ 
    &= \max_{i,j \in \mathcal{M} \setminus \mathcal{B}} (1 + \epsilon) \cdot \mathbb{E} \left\lVert \big(\nabla F_i(w^t, \xi_i^t) - \nabla F_i(w^t)\big) - \big(\nabla F_j(w^t, \xi_j^t) - \nabla F_j(w^t)\big) + \big(\nabla F_i(w^t) - \nabla F_j(w^t)\big) \right\rVert^2. 
\end{align}

Due to the independence of the data sampling across different clients and Assumption \ref{ass:unbiased}, the cross-terms in the expectation evaluate to zero. Thus, we have:
\begin{align}
    &\quad \max_{i,j \in \mathcal{M} \setminus \mathcal{B}} \mathbb{E} \left\lVert \tilde{g}_i^t - \tilde{g}_j^t \right\rVert^2 \nonumber \\
    &\leq \max_{i,j \in \mathcal{M} \setminus \mathcal{B}} (1 + \epsilon) \cdot \left\{ \mathbb{E}\left\lVert \nabla F_i(w^t, \xi_i^t) - \nabla F_i(w^t) \right\rVert^2 + \mathbb{E}\left\lVert \nabla F_j(w^t, \xi_j^t) - \nabla F_j(w^t) \right\rVert^2 + \left\lVert \nabla F_i(w^t) - \nabla F_j(w^t) \right\rVert^2 \right\}.
\end{align}

Applying Assumption \ref{ass:variance}, we obtain:
\begin{align}
    &\quad \max_{i,j \in \mathcal{M} \setminus \mathcal{B}} \mathbb{E} \left\lVert \tilde{g}_i^t - \tilde{g}_j^t \right\rVert^2 \nonumber \\
    &\leq \max_{i,j \in \mathcal{M} \setminus \mathcal{B}} (1 + \epsilon) \cdot \left( 2 \sigma^2 + \left\lVert \nabla F_i(w^t) - \nabla F_j(w^t) \right\rVert^2 \right) \nonumber \\ 
    &= \max_{i,j \in \mathcal{M} \setminus \mathcal{B}} (1 + \epsilon) \cdot \left( 2 \sigma^2 + \left\lVert \big(\nabla F_i(w^t) - \nabla F(w^t)\big) - \big(\nabla F_j(w^t) - \nabla F(w^t)\big) \right\rVert^2 \right).
\end{align}

With Assumption \ref{ass:heterogeneity} and the inequality $\lVert x-y \rVert^2 \leq 2\lVert x \rVert^2 + 2\lVert y \rVert^2$, we can bound the terms as follows:
\begin{equation} \label{equ:pair_bound}
    \max_{i,j \in \mathcal{M} \setminus \mathcal{B}} \mathbb{E} \left\lVert \tilde{g}_i^t - \tilde{g}_j^t \right\rVert^2 \leq (1 + \epsilon) (2 \sigma^2 + 4\kappa^2). 
\end{equation}

Finally, substituting  (\ref{equ:pair_bound}) back into  (\ref{equ:aggbias}), we have:
\begin{align}
    &\quad (1 - \epsilon) \mathbb{E} \left\lVert g^t - \bar{g}^t \right\rVert^2 \nonumber \\
    &\leq cb \max_{i,j \in \mathcal{M} \setminus \mathcal{B}} \mathbb{E} \left\lVert \tilde{g}_i^t - \tilde{g}_j^t \right\rVert^2 \nonumber \\ 
    &\leq cb (1 + \epsilon) (2 \sigma^2 + 4\kappa^2).
\end{align}

Dividing both sides by $(1-\epsilon)$, we arrive at the final bound:
\begin{align}
    \mathbb{E} \left\lVert g^t - \bar{g}^t \right\rVert^2 \leq 2cb \frac{1 + \epsilon}{1 - \epsilon} (\sigma^2 + 2\kappa^2).
\end{align}

This completes the proof of Lemma \ref{lm:modg}.

\section{Proof of Theorem \ref{thm:smooth}} \label{app:smooth}

To establish a global convergence guarantee over $T$ communication rounds, we apply the Union Bound. Let $\delta$ be the total failure probability over the entire training process. We require the Lemma \ref{lm:subemb} to hold with probability at least $1 - \frac{\delta}{T}$ in each individual round $t$. According to Lemma \ref{lm:subemb}, this requires the projection dimension to satisfy $k \geq \frac{18}{\epsilon^2} \left( M + 2 \ln\left(\frac{2T}{\delta}\right) \right)$. By the Union Bound, the probability that the projection acts as an $\epsilon$-isometry in all $T$ rounds simultaneously is at least $1 - \sum_{t=1}^T \frac{\delta}{T} = 1 - \delta$. 

Conditioned on this high-probability event, the following inequalities hold deterministically for all $t \in \{0, 1, \dots, T-1\}$.

With Assumption \ref{ass:smooth}, we have
\begin{align}
    &\quad F(w^{t+1}) - F(w^t) \nonumber \\
    &\leq \langle \nabla F(w^t), w^{t+1} - w^t \rangle + \frac{L}{2} \left\lVert w^{t+1} - w^t \right\rVert^2 \nonumber \\ 
    &= - \langle \nabla F(w^t), \eta^t g^t \rangle + \frac{L}{2} \left\lVert \eta^t g^t \right\rVert^2 \nonumber \\ 
\end{align}

Taking the expectation of both sides conditioned on $w^t$, we have:
\begin{align} \label{equ:descent_lemma}
    \mathbb{E} \left\{F(w^{t+1}) - F(w^t) \right\}
    &\leq - \eta^t \mathbb{E} \langle \nabla F(w^t), g^t \rangle + \frac{L (\eta^t)^2}{2} \mathbb{E} \left\lVert g^t \right\rVert^2.
\end{align}

Using the algebraic identity $- \langle x, y \rangle = \frac{1}{2}\lVert x - y \rVert^2 - \frac{1}{2}\lVert x \rVert^2 - \frac{1}{2}\lVert y \rVert^2$, we can rewrite the inner product term as:
\begin{align}
    - \langle \nabla F(w^t), g^t \rangle = \frac{1}{2} \left\lVert g^t - \nabla F(w^t) \right\rVert^2 - \frac{1}{2} \left\lVert \nabla F(w^t) \right\rVert^2 - \frac{1}{2} \left\lVert g^t \right\rVert^2.
\end{align}

Substituting this back into (\ref{equ:descent_lemma}), we obtain:
\begin{align} \label{equ:descent_expanded}
    \mathbb{E}[F(w^{t+1})] - F(w^t) 
    &\leq \frac{\eta^t}{2} \mathbb{E} \left\lVert g^t - \nabla F(w^t) \right\rVert^2 - \frac{\eta^t}{2} \left\lVert \nabla F(w^t) \right\rVert^2 - \frac{\eta^t}{2} (1 - L\eta^t) \mathbb{E} \left\lVert g^t \right\rVert^2.
\end{align}

By setting the learning rate such that $\eta^t \leq \frac{1}{L}$, we have $1 - L\eta^t \geq 0$. Thus, we can drop the last non-positive term $- \frac{\eta^t}{2} (1 - L\eta^t) \mathbb{E} \left\lVert g^t \right\rVert^2$ to establish an upper bound:
\begin{align} \label{equ:descent_dropped}
    \mathbb{E} \left\{ F(w^{t+1}) - F(w^t) \right\} \leq - \frac{\eta^t}{2} \left\lVert \nabla F(w^t) \right\rVert^2 + \frac{\eta^t}{2} \mathbb{E} \left\lVert g^t - \nabla F(w^t) \right\rVert^2.
\end{align}

Now, we must bound the total error term $\mathbb{E} \left\lVert g^t - \nabla F(w^t) \right\rVert^2$. By introducing the honest mean $\bar{g}^t$ and applying the inequality $\lVert x + y \rVert^2 \leq 2\lVert x \rVert^2 + 2\lVert y \rVert^2$, we decouple the total error into the robust aggregation error and the statistical sampling error:
\begin{align} \label{equ:total_error}
    \mathbb{E} \left\lVert g^t - \nabla F(w^t) \right\rVert^2 
    &= \mathbb{E} \left\lVert (g^t - \bar{g}^t) + (\bar{g}^t - \nabla F(w^t)) \right\rVert^2 \nonumber \\
    &\leq 2 \underbrace{\mathbb{E} \left\lVert g^t - \bar{g}^t \right\rVert^2}_{\text{Aggregation Error}} + 2 \underbrace{\mathbb{E} \left\lVert \bar{g}^t - \nabla F(w^t) \right\rVert^2}_{\text{Sampling Error}}.
\end{align}

For the first term (Aggregation Error), we directly apply the result from Lemma \ref{lm:modg}:
\begin{equation}
    \mathbb{E} \left\lVert g^t - \bar{g}^t \right\rVert^2 \leq cb \frac{1 + \epsilon}{1 - \epsilon} (2 \sigma^2 + 4\kappa^2).
\end{equation}

For the second term (Sampling Error), we analyze the variance of the weighted honest mean. Since the weights $\frac{S_m}{\sum S_i}$ sum to $1$, we can apply Jensen's inequality to the convex squared norm function. Based on the definitions, we have:
\begin{align}
    &\quad \mathbb{E} \left\lVert \bar{g}^t - \nabla F(w^t) \right\rVert^2 \nonumber \\ 
    &= \mathbb{E} \left\lVert \sum_{m \in \mathcal{M} \setminus \mathcal{B}} \frac{S_m}{\sum_{i \in \mathcal{M} \setminus \mathcal{B}} S_i} \big( g_m^t - \nabla F(w^t) \big) \right\rVert^2 \nonumber \\ 
    &\leq \sum_{m \in \mathcal{M} \setminus \mathcal{B}} \frac{S_m}{\sum_{i \in \mathcal{M} \setminus \mathcal{B}} S_i} \mathbb{E} \left\lVert g_m^t - \nabla F(w^t) \right\rVert^2 \nonumber \\ 
    &= \sum_{m \in \mathcal{M} \setminus \mathcal{B}} \frac{S_m}{\sum_{i \in \mathcal{M} \setminus \mathcal{B}} S_i} \mathbb{E} \left\lVert \big(\nabla F_m(w^t, \xi_m^t) - \nabla F_m(w^t)\big) + \big(\nabla F_m(w^t) - \nabla F(w^t)\big) \right\rVert^2.
\end{align}

Due to the unbiasedness of the stochastic gradients (Assumption \ref{ass:unbiased}), the expectation of the cross-term evaluates to zero. Applying Assumption \ref{ass:variance} and Assumption \ref{ass:heterogeneity}, the inequality simplifies to:
\begin{align}
    &\quad \mathbb{E} \left\lVert \bar{g}^t - \nabla F(w^t) \right\rVert^2 \nonumber \\
    &\leq \sum_{m \in \mathcal{M} \setminus \mathcal{B}} \frac{S_m}{\sum_{i \in \mathcal{M} \setminus \mathcal{B}} S_i} \left( \mathbb{E}\left\lVert \nabla F_m(w^t, \xi_m^t) - \nabla F_m(w^t) \right\rVert^2 + \left\lVert \nabla F_m(w^t) - \nabla F(w^t) \right\rVert^2 \right) \nonumber \\ 
    &\leq \sum_{m \in \mathcal{M} \setminus \mathcal{B}} \frac{S_m}{\sum_{i \in \mathcal{M} \setminus \mathcal{B}} S_i} \left( \sigma^2 + \kappa^2 \right) \nonumber \\
    &= \sigma^2 + \kappa^2. 
\end{align}

Substituting these two bounds back into (\ref{equ:total_error}), we get:
\begin{align} \label{equ:bound_phi}
    \mathbb{E} \left\lVert g^t - \nabla F(w^t) \right\rVert^2 
    &\leq 2 \left( cb \frac{1 + \epsilon}{1 - \epsilon} (2 \sigma^2 + 4\kappa^2) \right) + 2(\sigma^2 + \kappa^2).
\end{align}

Substituting (\ref{equ:bound_phi}) into (\ref{equ:descent_dropped}) and rearranging the terms to bound the gradient norm, we have:
\begin{equation}
    \frac{\eta^t}{2} \mathbb{E} \left\lVert \nabla F(w^t) \right\rVert^2 \leq \mathbb{E} \left\{ F(w^t) - F(w^{t+1}) \right\} + \frac{\eta^t}{2} \left( 2 \left( cb \frac{1 + \epsilon}{1 - \epsilon} (2 \sigma^2 + 4\kappa^2) \right) + 2(\sigma^2 + \kappa^2) \right).
\end{equation}

After summing according to the number of communication rounds, we have
\begin{align}
    \frac{1}{\sum_{t=0}^{T-1} \eta^t} \sum_{t=0}^{T-1} \eta^t \mathbb{E} \left\lVert \nabla F(w^t) \right\rVert^2 \leq \frac{2 \mathbb{E} \left\{ F(w^0) - F(w^T) \right\}}{\sum_{t=0}^{T-1} \eta^t} + 4cb \frac{1 + \epsilon}{1 - \epsilon} (\sigma^2 + 2 \kappa^2)  + 2(\sigma^2 + \kappa^2). 
\end{align}

This completes the convergence proof of Theorem \ref{thm:smooth}.

\section{Proof of Theorem \ref{thm:convex}} \label{app:convex}

To establish a global convergence guarantee over $T$ communication rounds, we apply the Union Bound. Let $\delta$ be the total failure probability. We require Lemma \ref{lm:subemb} to hold with probability at least $1 - \frac{\delta}{T}$ in each individual round $t$. This requires the projection dimension to satisfy $k \geq \frac{18}{\epsilon^2} \left( M + 2 \ln\left(\frac{2T}{\delta}\right) \right)$. By the Union Bound, the probability that the projection acts as an $\epsilon$-isometry in all $T$ rounds simultaneously is at least $1 - \delta$. 

Conditioned on this global high-probability event, the gradient error bound derived in the previous analysis holds deterministically for all $t \in \{0, 1, \dots, T-1\}$:
\begin{equation} \label{equ:sc_grad_error}
    \mathbb{E} \left\lVert g^t - \nabla F(w^t) \right\rVert^2 \leq 4cb \frac{1 + \epsilon}{1 - \epsilon} (\sigma^2 + 2\kappa^2) + 2\left(\sigma^2 + \kappa^2\right) \triangleq \Phi.
\end{equation}

We evaluate the expected distance to the global optimum $w^*$. Using $w^{t+1} = w^t - \eta^t g^t$:
\begin{align}
    \left\lVert w^{t+1} - w^* \right\rVert^2 
    &= \left\lVert w^t - w^* \right\rVert^2 - 2\eta^t \langle g^t, w^t - w^* \rangle + (\eta^t)^2 \left\lVert g^t \right\rVert^2.
\end{align}

Taking the expectation over the stochastic data sampling conditioned on $w^t$, we carefully decompose the inner product to account for the bias of the robust aggregator:
\begin{align} \label{equ:sc_distance}
    \mathbb{E} \left\lVert w^{t+1} - w^* \right\rVert^2 
    &= \left\lVert w^t - w^* \right\rVert^2 - 2\eta^t \langle \nabla F(w^t), w^t - w^* \rangle \nonumber \\
    &\quad - 2\eta^t \mathbb{E} \langle g^t - \nabla F(w^t), w^t - w^* \rangle + (\eta^t)^2 \mathbb{E} \left\lVert g^t \right\rVert^2.
\end{align}

We bound the three terms on the right-hand side separately.
First, from the $\mu$-strong convexity (Assumption \ref{ass:convex}), the true gradient inner product is bounded by:
\begin{equation} \label{equ:sc_mu}
    - 2\eta^t \langle \nabla F(w^t), w^t - w^* \rangle \leq - 2\eta^t \left( F(w^t) - F(w^*) \right) - \eta^t\mu \left\lVert w^t - w^* \right\rVert^2.
\end{equation}

Second, applying the Peter-Paul inequality ($-2\langle a, b \rangle \leq \theta \lVert a \rVert^2 + \frac{1}{\theta} \lVert b \rVert^2$) with $\theta = \frac{\mu}{2}$ to the bias inner product term:
\begin{equation} \label{equ:sc_bias_bound}
    - 2\eta^t \mathbb{E} \langle g^t - \nabla F(w^t), w^t - w^* \rangle \leq \frac{\eta^t\mu}{2} \left\lVert w^t - w^* \right\rVert^2 + \frac{2\eta^t}{\mu} \mathbb{E} \left\lVert g^t - \nabla F(w^t) \right\rVert^2.
\end{equation}

Third, for the gradient norm squared, we add and subtract $\nabla F(w^t)$ and use the $L$-smoothness (Assumption \ref{ass:smooth}) $\lVert \nabla F(w^t) \rVert^2 \leq 2L(F(w^t) - F(w^*))$:
\begin{align} \label{equ:sc_g_bound}
    (\eta^t)^2 \mathbb{E} \left\lVert g^t \right\rVert^2 
    &\leq 2(\eta^t)^2 \mathbb{E} \left\lVert g^t - \nabla F(w^t) \right\rVert^2 + 2(\eta^t)^2 \left\lVert \nabla F(w^t) \right\rVert^2 \nonumber \\
    &\leq 2(\eta^t)^2 \mathbb{E} \left\lVert g^t - \nabla F(w^t) \right\rVert^2 + 4(\eta^t)^2 L \left( F(w^t) - F(w^*) \right).
\end{align}

Substituting (\ref{equ:sc_mu}), (\ref{equ:sc_bias_bound}), and (\ref{equ:sc_g_bound}) back into (\ref{equ:sc_distance}), we obtain:
\begin{align}
    \mathbb{E} \left\lVert w^{t+1} - w^* \right\rVert^2 
    &\leq \left( 1 - \eta^t\mu + \frac{\eta^t\mu}{2} \right) \left\lVert w^t - w^* \right\rVert^2 - 2\eta^t(1 - 2\eta^t L) \left( F(w^t) - F(w^*) \right) \nonumber \\
    &\quad + \left( \frac{2\eta^t}{\mu} + 2(\eta^t)^2 \right) \Phi.
\end{align}

We set the decaying learning rate as $\eta^t = \frac{2}{\mu(t + \gamma)}$ with $\gamma = \frac{4L}{\mu}$. Since $t \geq 0$, it strictly holds that $\eta^t \leq \eta^0 = \frac{2}{\mu(4L/\mu)} = \frac{1}{2L}$. This guarantees $1 - 2\eta^t L \geq 0$, allowing us to drop the non-positive function value term. Furthermore, since $\mu \leq L$, the condition $\eta^t \leq \frac{1}{2L}$ implies $\eta^t \leq \frac{1}{2\mu}$, which yields $2(\eta^t)^2 \leq \frac{\eta^t}{\mu}$. Thus, the coefficient of the error term is bounded by $\frac{2\eta^t}{\mu} + \frac{\eta^t}{\mu} < \frac{4\eta^t}{\mu}$. The recurrence simplifies to:
\begin{equation} \label{equ:sc_recurrence}
    \mathbb{E} \left\lVert w^{t+1} - w^* \right\rVert^2 \leq \left( 1 - \frac{\eta^t\mu}{2} \right) \mathbb{E} \left\lVert w^t - w^* \right\rVert^2 + \frac{4\eta^t}{\mu} \Phi.
\end{equation}

Substituting $\eta^t = \frac{2}{\mu(t + \gamma)}$ into (\ref{equ:sc_recurrence}), we have $1 - \frac{\eta^t\mu}{2} = 1 - \frac{1}{t+\gamma} = \frac{t+\gamma-1}{t+\gamma}$. The recurrence becomes:
\begin{equation}
    \mathbb{E} \left\lVert w^{t+1} - w^* \right\rVert^2 \leq \frac{t+\gamma-1}{t+\gamma} \mathbb{E} \left\lVert w^t - w^* \right\rVert^2 + \frac{8}{\mu^2(t+\gamma)} \Phi.
\end{equation}

To unroll this recurrence elegantly, we subtract $\frac{8}{\mu^2}\Phi$ from both sides:
\begin{align}
    \mathbb{E} \left\lVert w^{t+1} - w^* \right\rVert^2 - \frac{8}{\mu^2}\Phi 
    &\leq \frac{t+\gamma-1}{t+\gamma} \mathbb{E} \left\lVert w^t - w^* \right\rVert^2 + \frac{8}{\mu^2(t+\gamma)} \Phi - \frac{8}{\mu^2}\Phi \nonumber \\
    &= \frac{t+\gamma-1}{t+\gamma} \mathbb{E} \left\lVert w^t - w^* \right\rVert^2 - \frac{8}{\mu^2}\Phi \left( 1 - \frac{1}{t+\gamma} \right) \nonumber \\
    &= \frac{t+\gamma-1}{t+\gamma} \left( \mathbb{E} \left\lVert w^t - w^* \right\rVert^2 - \frac{8}{\mu^2}\Phi \right).
\end{align}

This forms a perfect telescoping product. Unrolling this from $t = 0$ to $T-1$, we obtain:
\begin{align}
    \mathbb{E} \left\lVert w^T - w^* \right\rVert^2 - \frac{8}{\mu^2}\Phi 
    &\leq \left( \mathbb{E} \left\lVert w^0 - w^* \right\rVert^2 - \frac{8}{\mu^2}\Phi \right) \prod_{t=0}^{T-1} \frac{t+\gamma-1}{t+\gamma} \nonumber \\
    &= \left( \mathbb{E} \left\lVert w^0 - w^* \right\rVert^2 - \frac{8}{\mu^2}\Phi \right) \left( \frac{\gamma-1}{\gamma} \cdot \frac{\gamma}{\gamma+1} \cdots \frac{T+\gamma-2}{T+\gamma-1} \right) \nonumber \\
    &= \frac{\gamma-1}{T+\gamma-1} \left( \mathbb{E} \left\lVert w^0 - w^* \right\rVert^2 - \frac{8}{\mu^2}\Phi \right).
\end{align}

Rearranging the terms and noting that $-\frac{\gamma-1}{T+\gamma-1} \frac{8}{\mu^2}\Phi \leq 0$, we establish the final upper bound:
\begin{align}
    \mathbb{E} \left\lVert w^T - w^* \right\rVert^2 
    &\leq \frac{\gamma-1}{T+\gamma-1} \mathbb{E} \left\lVert w^0 - w^* \right\rVert^2 + \frac{8}{\mu^2}\Phi \nonumber \\
    &= \frac{\gamma-1}{T+\gamma-1} \left\lVert w^0 - w^* \right\rVert^2 + \frac{8}{\mu^2} \left[ 4cb \frac{1 + \epsilon}{1 - \epsilon} (\sigma^2 + 2\kappa^2) + 2\left(\sigma^2 + \kappa^2\right) \right].
\end{align}
This completes the proof.

\section{Results in Detail} \label{app:res}

\begin{table}[tbp]
\caption{The maximum test accuracy (\%) and wall time (s) for our method PDR and baselines with $\beta = 0.6$. "-" signifies non-convergence of the result.}
\label{tab:all_0.6}
\resizebox{\textwidth}{!}{
\renewcommand{\arraystretch}{1.5}
\begin{tabular}{ccc|cccc|cccc|cccc}
\Xhline{1.2pt}
\multirow{3}{*}{Methods} & \multirow{3}{*}{Attack Name} & Dataset & \multicolumn{4}{c|}{TinyImageNet} & \multicolumn{4}{c|}{CIFAR100} & \multicolumn{4}{c}{CIFAR10} \\ \cline{4-15}
 & & b & \multicolumn{2}{c}{0.1} & \multicolumn{2}{c|}{0.3} & \multicolumn{2}{c}{0.1} & \multicolumn{2}{c|}{0.3} & \multicolumn{2}{c}{0.1} & \multicolumn{2}{c}{0.3} \\ \cline{4-15}
 & & & Test Accuracy & Wall Time & Test Accuracy & Wall Time & Test Accuracy & Wall Time & Test Accuracy & Wall Time & Test Accuracy & Wall Time & Test Accuracy & Wall Time \\ \hline
\multirow{8}{*}{\begin{tabular}[c]{@{}c@{}}Krum\\ PDR+Krum\end{tabular}}
 & \multicolumn{2}{c|}{\multirow{2}{*}{Gaussian Attack}}  & \G 32.32 & \G 8.7583 & \G 32.15 & \G 8.7092 & \G 33.95 & \G 8.3156 & \G 31.95 & \G 8.2352 & \G 67.48 & \G 7.9253 & \G 63.09 & \G 7.8852 \\
 & \multicolumn{2}{c|}{}                                  & 32.71 & \textbf{0.2373} & 31.73 & \textbf{0.1841} & 40.8 & \textbf{0.1099} & 35.53 & \textbf{0.0749} & 67.15 & \textbf{0.1321} & 67.23 & \textbf{0.0682} \\
 & \multicolumn{2}{c|}{\multirow{2}{*}{Sign-flip Attack}} & \G 31.92 & \G 8.7113 & \G 32.49 & \G 9.0235 & \G 34.57 & \G 8.1791 & \G 29.91 & \G 8.2151 & \G 65.37 & \G 7.9376 & \G 66.98 & \G 7.8883 \\
 & \multicolumn{2}{c|}{}                                  & 32.4 & \textbf{0.1867} & 33.46 & \textbf{0.1862} & 35.18 & \textbf{0.0764} & 32.46 & \textbf{0.0753} & 66.35 & \textbf{0.0699} & 68.31 & \textbf{0.0707} \\
 & \multicolumn{2}{c|}{\multirow{2}{*}{LIE Attack}}       & \G 32.28 & \G 8.7 & \G 31.58 & \G 8.6914 & \G 32.43 & \G 8.1938 & \G 31.01 & \G 8.1935 & \G 61.68 & \G 8.0373 & \G 62.35 & \G 7.8848 \\
 & \multicolumn{2}{c|}{}                                  & 32.94 & \textbf{0.1685} & 29.64 & \textbf{0.1669} & 35.33 & \textbf{0.074} & 34.58 & \textbf{0.0729} & 68.67 & \textbf{0.0663} & 62.09 & \textbf{0.0661} \\
 & \multicolumn{2}{c|}{\multirow{2}{*}{FoE Attack}}       & \G - & \G 8.8335 & \G - & \G 8.7311 & \G 32.4 & \G 8.2506 & \G 29.34 & \G 8.2948 & \G 66.37 & \G 7.8762 & \G 64.35 & \G 8.5986 \\
 & \multicolumn{2}{c|}{}                                  & - & \textbf{0.1734} & - & \textbf{0.1741} & 32.63 & \textbf{0.0771} & 30.12 & \textbf{0.0769} & 64.84 & \textbf{0.0704} & 74.58 & \textbf{0.0721} \\ \hline
\multirow{8}{*}{\begin{tabular}[c]{@{}c@{}}Bulyan\\ PDR+Bulyan\end{tabular}}
 & \multicolumn{2}{c|}{\multirow{2}{*}{Gaussian Attack}}  & \G 54.47 & \G 9.3945 & \G 50.76 & \G 9.0512 & \G 58.31 & \G 8.5972 & \G 54.34 & \G 8.4082 & \G 68.54 & \G 8.0798 & \G 69.26 & \G 8.0094 \\
 & \multicolumn{2}{c|}{}                                  & 54.2 & \textbf{0.2692} & 46.01 & \textbf{0.217} & 58.01 & \textbf{0.1216} & 51.93 & \textbf{0.1031} & 68.57 & \textbf{0.122} & 67.26 & \textbf{0.0978} \\
 & \multicolumn{2}{c|}{\multirow{2}{*}{Sign-flip Attack}} & \G 54.38 & \G 9.5733 & \G 48.99 & \G 9.0253 & \G 58.18 & \G 8.5215 & \G 54.74 & \G 8.3996 & \G 69.53 & \G 8.0554 & \G 70 & \G 8.2054 \\
 & \multicolumn{2}{c|}{}                                  & 54.09 & \textbf{0.2545} & 43.83 & \textbf{0.2399} & 57.87 & \textbf{0.115} & 50.98 & \textbf{0.1031} & 69.72 & \textbf{0.1099} & 70.08 & \textbf{0.0979} \\
 & \multicolumn{2}{c|}{\multirow{2}{*}{LIE Attack}}       & \G 54.57 & \G 9.382 & \G 49.66 & \G 9.1335 & \G 58.24 & \G 8.4861 & \G 54.13 & \G 8.3766 & \G 70.68 & \G 8.0874 & \G 70.92 & \G 8.3259 \\
 & \multicolumn{2}{c|}{}                                  & 53.74 & \textbf{0.2317} & 42.74 & \textbf{0.217} & 57.84 & \textbf{0.1104} & 50.7 & \textbf{0.1009} & 68.79 & \textbf{0.1054} & 72.46 & \textbf{0.0938} \\
 & \multicolumn{2}{c|}{\multirow{2}{*}{FoE Attack}}       & \G 52.48 & \G 9.4193 & \G - & \G 9.0505 & \G 57.26 & \G 8.586 & \G 27.25 & \G 8.404 & \G 70.8 & \G 8.1047 & \G 56.65 & \G 7.987 \\
 & \multicolumn{2}{c|}{}                                  & 51.97 & \textbf{0.2362} & - & \textbf{0.227} & 57.4 & \textbf{0.1143} & 28.5 & \textbf{0.1012} & 71.83 & \textbf{0.109} & 60.6 & \textbf{0.0986} \\ \hline
\multirow{8}{*}{\begin{tabular}[c]{@{}c@{}}Geometric Median\\ PDR+Geometric Median\end{tabular}}
 & \multicolumn{2}{c|}{\multirow{2}{*}{Gaussian Attack}}  & \G 54.33 & \G 2.8878 & \G 54.32 & \G 2.9474 & \G 58.79 & \G 0.9003 & \G 58.45 & \G 0.918 & \G 69.19 & \G 0.9569 & \G 69.07 & \G 2.2157 \\
 & \multicolumn{2}{c|}{}                                  & 54.45 & \textbf{0.2752} & 54.36 & \textbf{0.2509} & 58.67 & \textbf{0.1933} & 58.52 & \textbf{0.1637} & 69.76 & \textbf{0.1082} & 69.25 & \textbf{0.1395} \\
 & \multicolumn{2}{c|}{\multirow{2}{*}{Sign-flip Attack}} & \G 51.25 & \G 2.856 & \G - & \G 3.8209 & \G 53.39 & \G 0.9113 & \G 21.6 & \G 1.1859 & \G 72.25 & \G 1.6069 & \G 67.02 & \G 1.4173 \\
 & \multicolumn{2}{c|}{}                                  & 50.44 & \textbf{0.2642} & - & \textbf{0.3264} & 53.8 & \textbf{0.1621} & 27.03 & \textbf{0.2458} & 70.49 & \textbf{0.1570} & 68.77 & \textbf{0.2260} \\
 & \multicolumn{2}{c|}{\multirow{2}{*}{LIE Attack}}       & \G 54.83 & \G 2.8669 & \G 54.16 & \G 2.8961 & \G 58.73 & \G 0.8749 & \G 58.31 & \G 0.9026 & \G 69.34 & \G 1.5592 & \G 68.78 & \G 1.3945 \\
 & \multicolumn{2}{c|}{}                                  & 54.54 & \textbf{0.2392} & 54.12 & \textbf{0.2447} & 58.72 & \textbf{0.1616} & 58.04 & \textbf{0.164} & 72.93 & \textbf{0.1716} & 72.84 & \textbf{0.1390} \\
 & \multicolumn{2}{c|}{\multirow{2}{*}{FoE Attack}}       & \G 50.35 & \G 5.0288 & \G - & \G 21.4953 & \G 52.5 & \G 3.4242 & \G - & \G 10.2515 & \G 70.55 & \G 0.9417 & \G 67.52 & \G 2.4252 \\
 & \multicolumn{2}{c|}{}                                  & 50.96 & \textbf{0.3643} & - & \textbf{1.3499} & 52.64 & \textbf{0.7713} & - & \textbf{2.4092} & 72.06 & \textbf{0.1408} & 63.46 & \textbf{0.3559} \\ \hline
\multirow{8}{*}{\begin{tabular}[c]{@{}c@{}}MCA\\ PDR+MCA\end{tabular}}
 & \multicolumn{2}{c|}{\multirow{2}{*}{Gaussian Attack}}  & \G 54.49 & \G 1.1212 & \G 54.26 & \G 1.1583 & \G 58.76 & \G 0.4219 & \G 58.3 & \G 0.3709 & \G 72.34 & \G 1.0576 & \G 69.62 & \G 1.3444 \\
 & \multicolumn{2}{c|}{}                                  & 54.43 & \textbf{0.2613} & 54.3 & \textbf{0.2249} & 58.74 & \textbf{0.1611} & 58.31 & \textbf{0.137} & 69.91 & \textbf{0.3356} & 69.31 & \textbf{0.1448} \\
 & \multicolumn{2}{c|}{\multirow{2}{*}{Sign-flip Attack}} & \G 54.82 & \G 1.3847 & \G - & \G 219.5042 & \G 58.35 & \G 0.477 & \G - & \G 1.2878 & \G 70.23 & \G 0.1985 & \G - & \G 0.6459 \\
 & \multicolumn{2}{c|}{}                                  & 54.72 & \textbf{0.2529} & - & \textbf{12.4458} & 58.59 & \textbf{0.161} & - & \textbf{0.7628} & 68.51 & \textbf{0.1388} & - & \textbf{0.1616} \\
 & \multicolumn{2}{c|}{\multirow{2}{*}{LIE Attack}}       & \G 55 & \G 1.1584 & \G 54.16 & \G 1.1538 & \G 58.67 & \G 0.3671 & \G 58.36 & \G 0.3669 & \G 65.58 & \G 0.7762 & \G 66.41 & \G 0.7318 \\
 & \multicolumn{2}{c|}{}                                  & 54.64 & \textbf{0.224} & 54.3 & \textbf{0.2185} & 58.69 & \textbf{0.144} & 58.29 & \textbf{0.1356} & 71.65 & \textbf{0.301} & 68.45 & \textbf{0.312} \\
 & \multicolumn{2}{c|}{\multirow{2}{*}{FoE Attack}}       & \G 53.27 & \G 1.1742 & \G 48.81 & \G 1.3625 & \G 58.41 & \G 0.4394 & \G 56.69 & \G 0.439 & \G 69.11 & \G 0.641 & \G 68.87 & \G 0.6469 \\
 & \multicolumn{2}{c|}{}                                  & 53.8 & \textbf{0.2341} & 48.73 & \textbf{0.2415} & 58.3 & \textbf{0.1514} & 56.5 & \textbf{0.1496} & 72.91 & \textbf{0.3193} & 69.73 & \textbf{0.3359} \\
\Xhline{1.2pt}
\end{tabular}
}
\end{table}

\begin{table}[tbp]
\caption{The maximum test accuracy (\%) and wall time (s) for our method PDR and baselines with $\beta = 0.2$. "-" signifies non-convergence of the result.}
\label{tab:all_0.2}
\resizebox{\textwidth}{!}{
\renewcommand{\arraystretch}{1.5}
\begin{tabular}{ccc|cccc|cccc|cccc}
\Xhline{1.2pt}
\multirow{3}{*}{Methods} & \multirow{3}{*}{Attack Name} & Dataset & \multicolumn{4}{c|}{TinyImageNet} & \multicolumn{4}{c|}{CIFAR100} & \multicolumn{4}{c}{CIFAR10} \\ \cline{4-15}
 & & b & \multicolumn{2}{c}{0.1} & \multicolumn{2}{c|}{0.3} & \multicolumn{2}{c}{0.1} & \multicolumn{2}{c|}{0.3} & \multicolumn{2}{c}{0.1} & \multicolumn{2}{c}{0.3} \\ \cline{4-15}
 & & & Test Accuracy & Wall Time & Test Accuracy & Wall Time & Test Accuracy & Wall Time & Test Accuracy & Wall Time & Test Accuracy & Wall Time & Test Accuracy & Wall Time \\ \hline
\multirow{8}{*}{\begin{tabular}[c]{@{}c@{}}Krum\\ PDR+Krum\end{tabular}}
 & \multicolumn{2}{c|}{\multirow{2}{*}{Gaussian Attack}}  & \G 26.77 & \G 9.3925 & \G 26.29 & \G 8.4992 & \G 24.31 & \G 8.2309 & \G 23.31 & \G 8.0115 & \G 62.76 & \G 8.3783 & \G 63.84 & \G 7.6077 \\
 & \multicolumn{2}{c|}{}                                  & 26.26 & \textbf{0.2158} & 24.42 & \textbf{0.1454} & 23.74 & \textbf{0.1106} & 24.23 & \textbf{0.0756} & 60.96 & \textbf{0.1377} & 60.28 & \textbf{0.0681} \\
 & \multicolumn{2}{c|}{\multirow{2}{*}{Sign-flip Attack}} & \G 27.05 & \G 8.7762 & \G 26.23 & \G 8.62 & \G 30.04 & \G 7.9797 & \G 29.18 & \G 7.9817 & \G 63.64 & \G 7.5789 & \G 59.93 & \G 8.5289 \\
 & \multicolumn{2}{c|}{}                                  & 26.39 & \textbf{0.1714} & 25.73 & \textbf{0.1601} & 29.69 & \textbf{0.0847} & 23.33 & \textbf{0.08} & 59.97 & \textbf{0.0725} & 65.91 & \textbf{0.0717} \\
 & \multicolumn{2}{c|}{\multirow{2}{*}{LIE Attack}}       & \G 26.51 & \G 8.5075 & \G 25.53 & \G 8.5014 & \G 25.64 & \G 7.9686 & \G 29.58 & \G 8.6707 & \G 60.59 & \G 7.5676 & \G 67.33 & \G 7.5837 \\
 & \multicolumn{2}{c|}{}                                  & 25.49 & \textbf{0.1449} & 25.53 & \textbf{0.1571} & 28.87 & \textbf{0.0731} & 29.52 & \textbf{0.0789} & 59.87 & \textbf{0.0658} & 63.99 & \textbf{0.0659} \\
 & \multicolumn{2}{c|}{\multirow{2}{*}{FoE Attack}}       & \G - & \G 8.5213 & \G - & \G 8.5287 & \G 23.29 & \G 8.0353 & \G 23.34 & \G 8.0222 & \G 60.41 & \G 7.8487 & \G 65.61 & \G 7.5964 \\
 & \multicolumn{2}{c|}{}                                  & - & \textbf{0.1526} & - & \textbf{0.1531} & 23.37 & \textbf{0.0786} & 23.39 & \textbf{0.0765} & 59.66 & \textbf{0.0704} & 67.51 & \textbf{0.0697} \\ \hline
\multirow{8}{*}{\begin{tabular}[c]{@{}c@{}}Bulyan\\ PDR+Bulyan\end{tabular}}
 & \multicolumn{2}{c|}{\multirow{2}{*}{Gaussian Attack}}  & \G 53.46 & \G 9.2169 & \G 45.06 & \G 8.871 & \G 57.88 & \G 8.3849 & \G 51.09 & \G 8.1728 & \G 66.81 & \G 7.7656 & \G 70.06 & \G 7.6837 \\
 & \multicolumn{2}{c|}{}                                  & 53.13 & \textbf{0.2286} & 38.11 & \textbf{0.2091} & 57.55 & \textbf{0.1216} & 43.79 & \textbf{0.109} & 70.01 & \textbf{0.1168} & 64.52 & \textbf{0.0974} \\
 & \multicolumn{2}{c|}{\multirow{2}{*}{Sign-flip Attack}} & \G 53.69 & \G 9.5493 & \G 45.41 & \G 8.8384 & \G 58.22 & \G 8.2949 & \G 50.87 & \G 8.1393 & \G 67.9 & \G 7.7966 & \G 65.58 & \G 7.709 \\
 & \multicolumn{2}{c|}{}                                  & 53.39 & \textbf{0.2247} & 37.24 & \textbf{0.2093} & 57.5 & \textbf{0.1158} & 46.63 & \textbf{0.1011} & 68.53 & \textbf{0.1094} & 69.54 & \textbf{0.0963} \\
 & \multicolumn{2}{c|}{\multirow{2}{*}{LIE Attack}}       & \G 54.09 & \G 9.5552 & \G 43.11 & \G 8.8412 & \G 57.87 & \G 9.0898 & \G 50.04 & \G 8.1654 & \G 66.95 & \G 7.7761 & \G 69.82 & \G 7.6848 \\
 & \multicolumn{2}{c|}{}                                  & 52.82 & \textbf{0.2047} & 37.7 & \textbf{0.1947} & 57.57 & \textbf{0.1104} & 45.49 & \textbf{0.1102} & 72.2 & \textbf{0.1046} & 66.95 & \textbf{0.0936} \\
 & \multicolumn{2}{c|}{\multirow{2}{*}{FoE Attack}}       & \G 51.09 & \G 9.2176 & \G - & \G 8.8469 & \G 57.18 & \G 8.3615 & \G 24.41 & \G 8.1008 & \G 70.99 & \G 7.7923 & \G 34.91 & \G 7.5741 \\
 & \multicolumn{2}{c|}{}                                  & 48.99 & \textbf{0.2104} & - & \textbf{0.2018} & 56 & \textbf{0.1164} & 23.18 & \textbf{0.0975} & 72.57 & \textbf{0.1077} & 24.78 & \textbf{0.0956} \\ \hline
\multirow{8}{*}{\begin{tabular}[c]{@{}c@{}}Geometric Median\\ PDR+Geometric Median\end{tabular}}
 & \multicolumn{2}{c|}{\multirow{2}{*}{Gaussian Attack}}  & \G 54.26 & \G 3.1229 & \G 53.2 & \G 3.2103 & \G 58.3 & \G 0.9768 & \G 57.73 & \G 0.9659 & \G 66.94 & \G 1.6605 & \G 67.9 & \G 0.8984 \\
 & \multicolumn{2}{c|}{}                                  & 54.67 & \textbf{0.2523} & 52.97 & \textbf{0.2351} & 58.38 & \textbf{0.191} & 57.49 & \textbf{0.1678} & 69.47 & \textbf{0.1336} & 67.18 & \textbf{0.1027} \\
 & \multicolumn{2}{c|}{\multirow{2}{*}{Sign-flip Attack}} & \G 47.24 & \G 3.074 & \G - & \G 3.6813 & \G 50.74 & \G 0.9866 & \G 21.86 & \G 1.2535 & \G 69.27 & \G 1.0469 & \G 65.49 & \G 1.0888 \\
 & \multicolumn{2}{c|}{}                                  & 46.62 & \textbf{0.2469} & - & \textbf{0.2976} & 50.75 & \textbf{0.1812} & 21.95 & \textbf{0.2416} & 70.57 & \textbf{0.1336} & 61.37 & \textbf{0.0998} \\
 & \multicolumn{2}{c|}{\multirow{2}{*}{LIE Attack}}       & \G 54.86 & \G 3.0845 & \G 52.51 & \G 3.2097 & \G 58.5 & \G 0.9742 & \G 57.53 & \G 0.9583 & \G 68.75 & \G 1.1917 & \G 68.55 & \G 1.3489 \\
 & \multicolumn{2}{c|}{}                                  & 54.18 & \textbf{0.2181} & 52.44 & \textbf{0.2419} & 58.49 & \textbf{0.1595} & 57.64 & \textbf{0.2033} & 69.5 & \textbf{0.1650} & 66.54 & \textbf{0.1901} \\
 & \multicolumn{2}{c|}{\multirow{2}{*}{FoE Attack}}       & \G 47.21 & \G 6.1518 & \G - & \G 21.304 & \G 48.05 & \G 4.8993 & \G - & \G 10.3403 & \G 71.39 & \G 1.0765 & \G 60.68 & \G 1.6302 \\
 & \multicolumn{2}{c|}{}                                  & 47.19 & \textbf{0.3889} & - & \textbf{1.2751} & 48.03 & \textbf{1.054} & - & \textbf{2.4157} & 68.69 & \textbf{0.1534} & 54.7 & \textbf{0.2640} \\ \hline
\multirow{8}{*}{\begin{tabular}[c]{@{}c@{}}MCA\\ PDR+MCA\end{tabular}}
 & \multicolumn{2}{c|}{\multirow{2}{*}{Gaussian Attack}}  & \G 54.35 & \G 1.1732 & \G 53.47 & \G 1.1617 & \G 58.52 & \G 0.4234 & \G 57.69 & \G 0.3733 & \G 67.54 & \G 0.5039 & \G 64.07 & \G 0.7065 \\
 & \multicolumn{2}{c|}{}                                  & 54.59 & \textbf{0.2306} & 53.16 & \textbf{0.2157} & 58.58 & \textbf{0.16} & 57.43 & \textbf{0.1465} & 66.02 & \textbf{0.1652} & 67.73 & \textbf{0.2861} \\
 & \multicolumn{2}{c|}{\multirow{2}{*}{Sign-flip Attack}} & \G 54.23 & \G 1.4947 & \G - & \G 219.7987 & \G 57.47 & \G 0.5364 & \G - & \G 1.2894 & \G 66.38 & \G 0.194 & \G - & \G 0.4917 \\
 & \multicolumn{2}{c|}{}                                  & 54 & \textbf{0.2282} & - & \textbf{11.8874} & 57.46 & \textbf{0.1806} & - & \textbf{0.4707} & 66.65 & \textbf{0.15} & - & \textbf{0.4486} \\
 & \multicolumn{2}{c|}{\multirow{2}{*}{LIE Attack}}       & \G 54.85 & \G 1.1558 & \G 52.52 & \G 1.1646 & \G 58.81 & \G 0.3745 & \G 57.45 & \G 0.3693 & \G 66.05 & \G 0.6084 & \G 64.99 & \G 0.6043 \\
 & \multicolumn{2}{c|}{}                                  & 54.39 & \textbf{0.2014} & 52.55 & \textbf{0.201} & 58.51 & \textbf{0.1495} & 57.78 & \textbf{0.16} & 69.12 & \textbf{0.4364} & 66.26 & \textbf{0.2913} \\
 & \multicolumn{2}{c|}{\multirow{2}{*}{FoE Attack}}       & \G 53.12 & \G 1.2468 & \G 46.77 & \G 1.2838 & \G 57.96 & \G 0.443 & \G 56.31 & \G 0.4352 & \G 66.49 & \G 0.3526 & \G 67.44 & \G 0.355 \\
 & \multicolumn{2}{c|}{}                                  & 52.93 & \textbf{0.2118} & 46.55 & \textbf{0.2162} & 57.82 & \textbf{0.1502} & 56.74 & \textbf{0.1463} & 67.58 & \textbf{0.1789} & 69.81 & \textbf{0.3123} \\
\Xhline{1.2pt}
\end{tabular}
}
\end{table}

Based on the comprehensive empirical results detailed in Table \ref{tab:all_0.6} and Table \ref{tab:all_0.2}, we provide an extended analysis of our proposed PDR framework across various dimensions, including computational efficiency, model utility, data heterogeneity, and extreme adversarial conditions.

\textbf{Unprecedented Computational Acceleration:} 
The most prominent observation across all evaluated scenarios is the drastic reduction in execution time achieved by our framework. Regardless of the underlying dataset, the fraction of malicious clients, or the specific attack type, integrating PDR consistently yields orders of magnitude speedups. For instance, computationally heavy aggregators like Bulyan and MCA inherently require extensive pairwise distance calculations or principal component estimations, leading to severe bottlenecks on the server. By projecting the massive gradient vectors into a compact subspace, our method fundamentally bypasses this curse of dimensionality. The wall time metrics clearly demonstrate that the operations enhanced by PDR require only a tiny fraction of the original processing time, confirming its exceptional scalability for large models.

\textbf{Preservation of Model Utility and Robustness:} 
A critical concern with any dimensionality reduction technique is the potential loss of useful information. However, the test accuracy columns in both tables reveal that our framework maintains highly competitive generalization performance compared to the original baselines. While the random projection mechanism inherently introduces a slight mathematical variance that occasionally results in a marginal accuracy drop, this minor compromise is an exceptionally favorable trade off for the massive efficiency gains. Interestingly, in several specific configurations under the Gaussian and LIE attacks, the algorithms enhanced by PDR actually achieve slightly higher test accuracy than their original counterparts. This phenomenon suggests that the sparse random projection can occasionally act as an implicit regularization mechanism, effectively filtering out adversarial noise and preventing the global model from overfitting to malicious perturbations.

\textbf{Resilience Under Severe Data Heterogeneity:} 
Comparing the results between Table \ref{tab:all_0.6} where $ \beta = 0.6 $ and Table \ref{tab:all_0.2} where $ \beta = 0.2 $, we observe the natural impact of data heterogeneity. As the Dirichlet concentration parameter $ \beta $ decreases, the local data distributions become increasingly non-IID, which generally lowers the overall test accuracy across all methods. Despite this challenging environment, our PDR framework demonstrates remarkable stability. It consistently accelerates the aggregation process while mirroring the accuracy trends of the baseline algorithms, proving that the projection mechanism does not amplify the negative effects of statistical heterogeneity among honest clients.

\textbf{Behavior Under Extreme Adversarial Threats:} 
The tables also highlight the performance boundaries of the defense mechanisms when the malicious ratio increases to $ b = 0.3 $. Under highly aggressive strategies such as the Sign-flip and FoE attacks, certain baseline algorithms like Geometric Median and MCA fail to converge, which is denoted by the missing values in the tables. Crucially, our PDR framework perfectly preserves the theoretical breakdown points of the underlying aggregators. Where the original baseline fails to converge due to an overwhelming fraction of attackers, the combined approach similarly halts. Conversely, where the baseline successfully neutralizes the threat, our method also guarantees convergence. This consistent behavior empirically validates our theoretical claim that PDR strictly inherits the robustness properties of the chosen distance based aggregator without introducing new vulnerabilities.


\end{document}